\newcommand\ie{i.\,e.\xspace}
\newcommand\eg{e.\,g.\xspace}
\theoremstyle{plain}
\newtheorem{theorem}{Theorem}[section]
\newtheorem{lemma}[theorem]{Lemma}
\newtheorem{corollary}[theorem]{Corollary}
\theoremstyle{definition}
\newtheorem{assumption}[theorem]{Assumption}
\theoremstyle{remark}
\newcommand{\ind}{\perp\!\!\!\!\perp} 
\DeclareMathOperator*{\argmax}{arg\,max}
\DeclareMathOperator*{\argmin}{arg\,min}
\newcommand{\dd}{\mathop{}\!\mathrm{d}}
\newcommand{\cmark}{\textcolor{ForestGreen}{\ding{51}}}%
\newcommand{\xmark}{\textcolor{BrickRed}{\ding{55}}}%
\newcommand{\longname}{\emph{Causal Transformer}\xspace}
\newcommand{\shortname}{CT\xspace}
\icmltitlerunning{Causal Transformer for Estimating Counterfactual Outcomes}
\begin{document}

\twocolumn[
\icmltitle{Causal Transformer for Estimating Counterfactual Outcomes}




\begin{icmlauthorlist}
\icmlauthor{Valentyn Melnychuk}{lmu}
\icmlauthor{Dennis Frauen}{lmu}
\icmlauthor{Stefan Feuerriegel}{lmu}
\end{icmlauthorlist}

\icmlaffiliation{lmu}{LMU Munich, Munich, Germany}

\icmlcorrespondingauthor{Valentyn Melnychuk}{melnychuk@lmu.de}

\icmlkeywords{counterfactual inference, treatment effect estimation, personalized medicine, transformer}

\vskip 0.3in
]



\printAffiliationsAndNotice{}  

\begin{abstract}
Estimating counterfactual outcomes over time from observational data is relevant for many applications (e.g., personalized medicine). Yet, state-of-the-art methods build upon simple long short-term memory~(LSTM) networks, thus rendering inferences for complex, long-range dependencies challenging. In this paper, we develop a novel \longname for estimating counterfactual outcomes over time. Our model is specifically designed to capture complex, long-range dependencies among time-varying confounders. For this, we combine three transformer subnetworks with separate inputs for time-varying covariates, previous treatments, and previous outcomes into a joint network with in-between cross-attentions. We further develop a custom, end-to-end training procedure for our \longname. Specifically, we propose a novel counterfactual domain confusion loss to address confounding bias: it aims to learn adversarial balanced representations, so that they are predictive of the next outcome but non-predictive of the current treatment assignment. We evaluate our \longname based on synthetic and real-world datasets, where it achieves superior performance over current baselines. To the best of our knowledge, this is the first work proposing transformer-based architecture for estimating counterfactual outcomes from longitudinal data.
\end{abstract}

\section{Introduction} 
\label{sec:intro}    

\begin{figure*}[tbp]
    \vskip -0.07in
    \begin{center}
    \centerline{\includegraphics[width=0.9\textwidth]{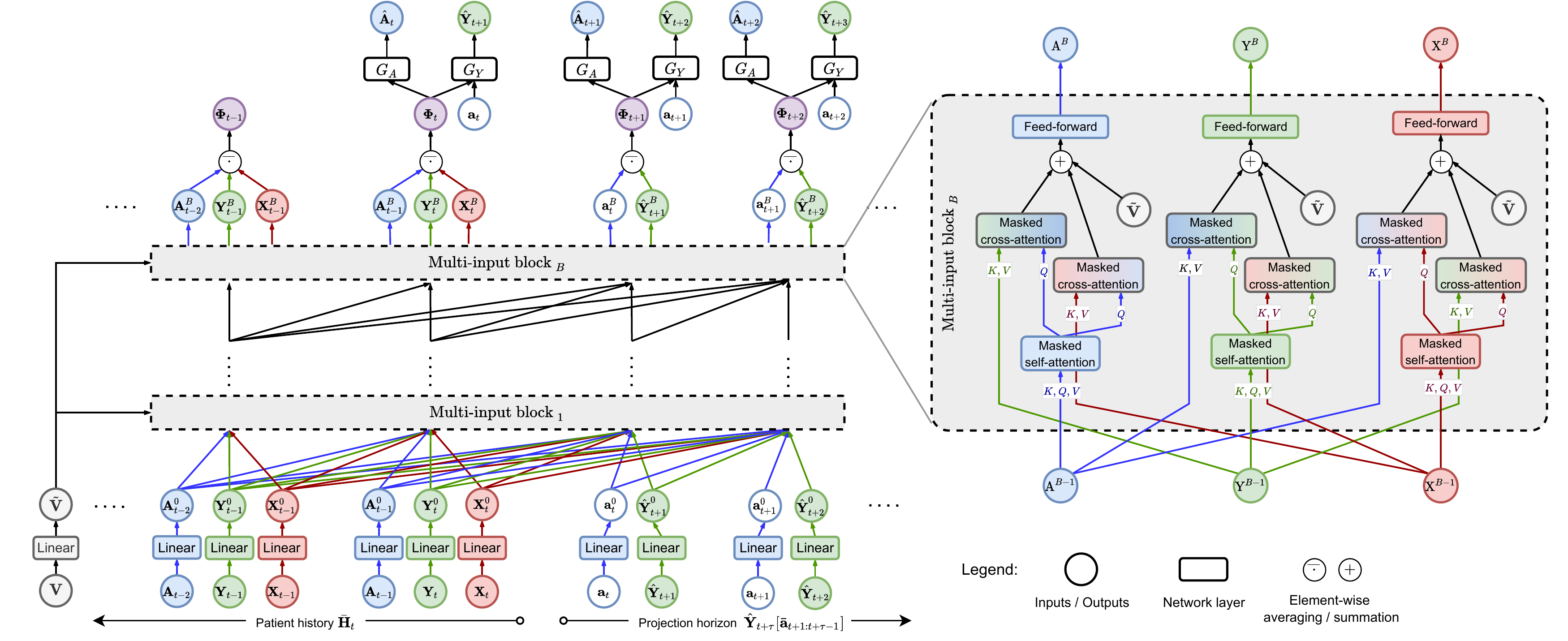}}
    \vskip -0.1in
    \caption{Overview of our \shortname. We distinguish two timelines: time steps $1, \ldots, t$ refer to observational data (patient trajectories) and thus input; time steps $t+1, \ldots, t+\tau$ is the projection horizon and thus output. Three separate transformers are used in parallel for encoding observational data as input: treatments $\mathbf{A}_t$ / treatment interventions $\mathbf{a}_t$ (blue), outcomes $\mathbf{Y}_{t}$ / outcome predictions $\hat{\mathbf{Y}}_{t}$ (green), and time-varying covariates $\mathbf{X}_t$ (red). These are fused via $B$ stacked multi-input blocks. Additional static covariates $\mathbf{V}$ (gray) are fed into all multi-input blocks. Each multi-input block further makes use of cross-attentions. Afterward, the three respective representation for treatments, outcomes, and time-varying covariates are averaged, giving the (balanced) representation $\mathbf{\Phi}_t$ (purple). On top of that are two additional networks $G_Y$ (outcome prediction network) and $G_A$ (treatment classifier network) for learning balanced representations in our CDC loss. Layer normalizations and residual connections are omitted for clarity.
    }
    \label{fig:multi-input-transformer}
    \end{center}
    \vskip -0.3in
\end{figure*} 


Decision-making in medicine requires precise knowledge of individualized health outcomes over time after applying different treatments \cite{huang2012analysis,hill2013assessing}. This then informs the choice of treatment plans and thus ensures effective care personalized to individual patients. Traditionally, the gold standard for estimating the effects of treatments are randomized controlled trials~(RCTs). However, RCTs are costly, often impractical, or even unethical. To address this, there is a growing interest in estimating health outcomes over time from observational data, such as, \eg, electronic health records.


Numerous methods have been proposed for estimating (counterfactual) outcomes from observational data in the static setting \cite{van2006targeted,chipman2010bart,johansson2016learning,curth2021nonparametric,kuzmanovic2022estimating}. Different from that, we focus on longitudinal settings, that is, \emph{over time}. In fact, longitudinal data are nowadays paramount in medical practice. For example, almost all electronic health records (EHRs) nowadays store sequences of medical events over time \cite{allam2021analyzing}. However, estimating counterfactual outcomes over time is challenging. One reason is that counterfactual outcomes are generally never observed. On top of that, directly estimating counterfactual outcomes with traditional machine learning methods in the presence of (time-varying) confounding has a larger generalization error of estimation \cite{alaa2018limits}, or is even biased (in case of multiple-step-ahead prediction) \cite{robins2009estimation,frauen2022estimating}. Instead, tailored methods are needed. 


To estimate counterfactual outcomes over time, state-of-the-art methods make nowadays use of machine learning. Prominent examples are: recurrent marginal structural networks~(RMSNs) \cite{lim2018forecasting}, counterfactual recurrent network~(CRN) \cite{bica2020estimating}, and G-Net \cite{li2021g}. However, these methods build upon simple long short-term memory~(LSTM) networks, because of which their ability to model complex, long-range dependencies in observational data is limited. Long-range dependencies are omnipresent in medical data; \eg, long-term treatment effects have been observed for obesity \cite{latner2000effective}, multiple sclerosis \cite{sormani2015can}, or diabetes \cite{jacobson2013long}. To address this, we develop a \longname~(\shortname) for estimating counterfactual outcomes over time. It is carefully designed to capture complex, long-range dependencies in medical data that are nowadays common in EHRs. 

In this paper, we aim at estimating counterfactual outcomes over time, that is, for one- and multi-step-ahead predictions. For this, we develop a novel \longname~(\shortname). It combines two innovations: (1)~a tailored transformer-based architecture to capture complex, long-range dependencies in the observational data; and (2)~a novel counterfactual domain confusion (CDC) loss for end-to-end training. 

For~(1), we combine three separate transformer subnetworks for processing time-varying covariates, past treatments, and past outcomes, respectively, into a joint network with in-between cross-attentions. Here, each transformer subnetwork is further extended by (i)~masked multi-head self-attention, (ii)~shared trainable relative positional encoding, and (iii)~attentional dropout. 

For~(2), we develop a custom end-to-end training procedure based on our CDC loss. This allows us to solve an adversarial balancing objective in which we balance representations to be (a)~predictive of outcomes and (b)~non-predictive of the current treatment assignment. The latter is crucial to address confounding bias and thus reduces the generalization error of counterfactual prediction. Importantly, this objective is different from previously proposed gradient reversal balancing \cite{ganin2015unsupervised, bica2020estimating}, as it aims to minimize a reversed KL-divergence to build balanced representations.

We demonstrate the effectiveness of our \shortname over state-of-the-art methods using an extensive series of experiments with synthetic and real-world data. Our ablation study (e.g., against a single-subnetwork architecture) shows that neither (1) nor (2) alone are sufficient for learning. Rather, it is crucial to combine our transformer-based architecture based on three subnetworks \emph{and} our novel CDC loss.   


Overall, our \textbf{main contributions} are as follows:\footnote{Code is available online: \url{https://github.com/Valentyn1997/CausalTransformer}}
\vspace{-0.2cm}
\begin{enumerate}[noitemsep]
\item We propose a new end-to-end model for estimating counterfactual outcomes over time: the \longname~(\shortname). To the best of our knowledge, this is the first transformer tailored to causal inference. 
\item We develop a custom training procedure for our \shortname based on a novel counterfactual domain confusion (CDC) loss.
\item We use synthetic and real-world data to demonstrate that our \shortname achieves state-of-the-art performance. We further achieve this both for one- and multi-step-ahead predictions. 
\end{enumerate}
\vspace{-0.3cm}

\section{Related Work} 
\label{sec:related-work}

\paragraph{Estimating counterfactual outcomes in static setting.} 

Extensive literature has focused on estimating counterfactual outcomes (or, analogously, individual treatment effects~[ITE]) in static settings \cite{johansson2016learning,alaa2018bayesian,wager2018estimation,yoon2018ganite,curth2021nonparametric}. Several works have adapted deep learning for that purpose \cite{johansson2016learning,yoon2018ganite}. In the static setting, the input is given by cross-sectional data, and, as such, there are \emph{no} time-varying covariates, treatments, and outcomes. However, we are interested in counterfactual outcome estimation over time. 

\paragraph{Estimating counterfactual outcomes over time.} Methods for estimating time-varying outcomes were originally introduced in epidemiology and make widespread use of simple linear models. Here,  the aim is to estimate average (non-individual) effects of time-varying treatments. Examples of such methods include G-computation, marginal structural models (MSMs), and structural nested models \cite{robins1986new,robins2000marginal,hernan2001marginal,robins2009estimation}. To address the limited expressiveness of linear models, several Bayesian non-parametric methods were proposed \cite{Xu16,schulam2017reliable,soleimani2017treatment}. However, these make strong assumptions regarding the data generation mechanism, and are not designed for multi-dimensional outcomes as well as static covariates. Other methods build upon recurrent neural networks  \cite{qian2021synctwin,berrevoets2021disentangled} but these are restricted to single-time treatments or make stronger assumptions for identifiability, which do not hold for our setting (see Appendix~\ref{app:methods-table}). 

There are several methods that build upon the potential outcomes framework \cite{rubin1978bayesian,robins2009estimation}, and, thus, ensure identifiability by making the same assumptions as we do (see Sec.~\ref{sec:problem-formulation}). Here, state-of-the-art methods are recurrent marginal structural networks~(RMSNs) \cite{lim2018forecasting}, counterfactual recurrent network~(CRN) \cite{bica2020estimating}, and G-Net \cite{li2021g}. These methods address bias due to time-varying confounding in different ways. RMSNs combine two propensity networks and use the predicted inverse probability of treatment weighting~(IPTW) scores for training the prediction networks. CRN uses an adversarial objective to produce a sequence of balanced representations, which are simultaneously predictive of the outcome but non-predictive of the current treatment assignment. G-Net aims to predict both outcomes and time-varying covariates, and then performs G-computation for multiple-step-ahead prediction. All of three aforementioned methods are built on top of one/two-layer LSTM encoder-decoder architectures. Because of that, they are limited in their ability to capture long-range, complex dependencies between time-varying confounders (\ie, time-varying covariates, previous treatments, and previous outcomes). However, such complex data are nowadays widespread in medical practice (\eg, EHRs) \cite{allam2021analyzing}, which may impede the performance of the previous methods for real-world medical data. As a remedy, we develop a \emph{deep} transformer network for counterfactual outcomes estimation over time.

\paragraph{Transformers.} Transformers refer to deep neural networks for sequential data that typically adopt a custom self-attention mechanism \cite{vaswani2017attention}. This makes transformers both flexible and powerful in modeling long-range associative dependencies for sequence-to-sequence tasks. Prominent examples come from natural language processing (e.g., BERT \cite{devlin2019bert}, RoBERTa \cite{liu2019roberta}, and GPT-3 \cite{brown2020language}). Other examples include image understanding tasks \cite{dosovitskiy2020image}, multi-modal tasks (image/video captioning) \cite{liu2021cptr}, math problem solving \cite{schlag2019enhancing}, and time-series forecasting \cite{tang2021probabilistic,zhou2021informer}. However, to the best of our knowledge, no paper has developed transformers specifically for causal inference. This presents our novelty.

\section{Problem Formulation} 
\label{sec:problem-formulation}

We build upon the standard setting for estimating counterfactual outcomes over time as in \cite{robins2009estimation,lim2018forecasting,bica2020estimating,li2021g}. Let $i$ refer to some patient and with health trajectories that span time steps $t = 1, \dots, T^{(i)}$. For each time step $t$ and each patient $i$, we have the following: $d_x$ time-varying covariates $\mathbf{X}_{t}^{(i)} \in \mathbb{R}^{d_x}$; $d_a$ categorical treatments $\mathbf{A}_{t}^{(i)} \in \{a_1, \dots, a_{d_a}\}$; and $d_y$ outcomes $\mathbf{Y}_{t}^{(i)} \in \mathbb{R}^{d_y}$. For example, data from critical care units of COVID-19 patients would involve blood pressure and heart rate as time-varying covariates, ventilation as treatment, and respiratory frequency as outcome. Treatments are modeled as categorical variables as this relates to the question of whether to apply a treatment or not, and is thus consistent with prior works \cite{lim2018forecasting,bica2020estimating,li2021g}. Further, we record static covariates describing a patient $\mathbf{V}^{(i)}$ (\eg, gender, age, or other risk factors). For notation, we omit patient index $(i)$ unless needed.  

For learning, we have access to i.i.d. observational data $\mathcal{D} = \big\{ \{\mathbf{x}_{t}^{(i)}, \mathbf{a}_{t}^{(i)}, \mathbf{y}_{t}^{(i)}\}_{t=1}^{T^{(i)}} \cup \mathbf{v}^{(i)} \big\}_{i=1}^N$. In clinical settings, such data are nowadays widely available in form of EHRs \cite{allam2021analyzing}. Here, we summarize the patient trajectory by $\bar{\mathbf{H}}_{t} = \{ \bar{\mathbf{X}}_{t}, \bar{\mathbf{A}}_{t-1}, \bar{\mathbf{Y}}_{t}, \mathbf{V} \}$, where $\bar{\mathbf{X}}_{t} = (\mathbf{X}_1, \dots, \mathbf{X}_t)$, $\bar{\mathbf{Y}}_{t} = (\mathbf{Y}_1, \dots, \mathbf{Y}_t)$, and $\bar{\mathbf{A}}_{t-1} = (\mathbf{A}_1, \dots, \mathbf{A}_{t-1})$.

We build upon the potential outcomes framework \cite{neyman1923application,rubin1978bayesian} and its extension to time-varying treatments and outcomes \cite{robins2009estimation}. Let $\tau \geq 1$ denote projection horizon for a $\tau$-step-ahead prediction. Further,
let $\bar{\mathbf{a}} _{t: t+\tau-1} = (\mathbf{a}_t, \ldots, \mathbf{a}_{t + \tau - 1})$ 
denote a given (non-random) treatment intervention. Then, we are interested in the potential outcomes, $\mathbf{Y}_{t + \tau}[\bar{\mathbf{a}}_{t: t+\tau-1}]$, under the treatment intervention. However, the potential outcomes for a specific treatment intervention are typically never observed for a patient but must be estimated. Formally, the potential counterfactual outcomes over time are identifiable from factual observational data $\mathcal{D}$ under three standard assumptions: (1)~consistency, (2)~sequential ignorability, and (3)~sequential overlap (see Appendix~\ref{app:assumptions} for details). 


Our task is thus to estimate future counterfactual outcomes $\mathbf{Y}_{t + \tau}$, after applying a treatment intervention $\bar{\mathbf{a}}_{t: t+\tau-1}$ for a given patient history $\bar{\mathbf{H}}_{t}$. Formally, we aim to estimate:
\begin{equation}
    \label{eq:estimand}
     \mathbb{E} \big( \mathbf{Y}_{t + \tau}[\bar{\mathbf{a}}_{t: t+\tau-1}] \;\mid\; \bar{\mathbf{H}}_{t} \big) .
\end{equation} 
To do so, we learn a function $g(\tau, \bar{\mathbf{a}}_{t: t+\tau-1}, \bar{\mathbf{H}}_{t})$. Simply estimating $g(\cdot)$ with traditional machine learning is biased \cite{robins2009estimation}. For example, one reason is that treatment interventions not only influence outcomes but also future covariates. To address this, we develop a tailored model for estimation.

\section{Causal Transformer}

\paragraph{Input.} 

Our \longname~(\shortname) is a single multi-input architecture, which combines three separate transformer subnetworks. Each subnetwork processes a different sequence as input: (i)~past time-varying covariates $\bar{\mathbf{X}}_{t}$; (ii)~past outcomes $\bar{\mathbf{Y}}_{t}$; and (iii)~past treatments before intervention $\bar{\mathbf{A}}_{t-1}$. Since we aim at estimating the counterfactual outcome after treatment intervention, we further input the future treatment assignment that a medical practitioners wants to intervene on. Also, we autoregressively feed predictions of outcomes  $\bar{\hat{\mathbf{Y}}}_{t+1:t+\tau-1}$, starting at the intervention time step (prediction origin). Thus, we concatenate two treatment sequences $\bar{\mathbf{A}}_{t-1} \cup \bar{\mathbf{a}}_{t: t+\tau-1}$, and two outcome sequences $\bar{\mathbf{Y}}_{t} \cup \bar{\hat{\mathbf{Y}}}_{t+1:t+\tau-1}$ for input. Additionally, (iv)~the static covariates $\mathbf{V}$ are fed into all subnetworks.

\subsection{Model architecture}

Our \shortname yields a sequence of treatment-invariant (balanced) \emph{representations} $\bar{\mathbf{\Phi}}_{t+\tau-1} = (\mathbf{\Phi}_1, \dots, \mathbf{\Phi}_{t+\tau-1})$. To do so, we stack $B$ identical \emph{transformer blocks}. The first transformer block receives the three input sequences. The $B$-th transformer block outputs a sequence of representations $\bar{\mathbf{\Phi}}_{t+\tau-1}$. The architecture is shown in Fig.~\ref{fig:multi-input-transformer}.

\paragraph{Transformer blocks.} 

Let $b = 1, \ldots, B$ index the different transformer blocks. Each transformer block receives three parallel sequences of hidden states as input (for each of the input sequences). For time step $t$, we denote the respective hidden state by $\mathbf{A}^b_t$ or $\mathbf{a}^b_t$; $\mathbf{Y}^b_t$ or $\hat{\mathbf{Y}}^b_t$; and $\mathbf{X}^b_t$. We denote size of the hidden states by $d_h$. Further, each transformer block receives a representation vector of the static covariates $\tilde{\mathbf{V}}$ as additional input.  

For the first transformer block ($b=1$), we use linearly-transformed time-series as input:
\begin{align}
    \begin{split}
        & \mathbf{A}_t^0, \mathbf{a}_t^0 = \operatorname{Linear}_A(\mathbf{A}_{t}, \mathbf{a}_{t}), \quad \,\, \mathbf{X}_t^0 = \operatorname{Linear}_X(\mathbf{X}_{t}), \\
        & \mathbf{Y}_t^0, \hat{\mathbf{Y}}_t^0 = \operatorname{Linear}_Y(\mathbf{Y}_{t}, \hat{\mathbf{Y}}_t), \quad \,\, \tilde{\mathbf{V}} = \operatorname{Linear}_V(\mathbf{V}),
    \end{split}
\end{align}
where parameters of fully-connected linear layers are shared for all time steps. All blocks $ \ge 2 $ use the output sequence of the previous block $b-1$ as inputs. For notation, we denote sequences of hidden states after block $b$ by three tensors $\mathrm{A}^b = \big(\bar{\mathbf{A}}^b_{t-1} \cup \bar{\mathbf{a}}^b_{t: t+\tau-1} \big)^\top,  \mathrm{X}^b = \big(\bar{\mathbf{X}}_{t}^b\big)^\top$, and $\mathrm{Y}^b = \big(\bar{\mathbf{Y}}_{t}^b \cup \bar{\hat{\mathbf{Y}}}_{t+1:t+\tau-1}^b \big), ^\top$.

Following \cite{dong2021attention,lu2021pretrained}, each transformer block combines a (i)~multi-head self-/cross-attention, (ii)~feed-forward layer, and (iii)~layer normalization. Details are in Appendix~\ref{app:CT-block}.  

\underline{(i) Multi-head self-/cross-attention} uses a scaled dot-product attention with several parallel attention heads. Each attention head requires a 3-tuple of keys, queries, and values, \ie, $K, Q, V \in \mathbb{R}^{T \times d_{qkv}}$, respectively. These are obtained from a sequence of hidden states  $\mathrm{H}^b = \big(\mathbf{h}_1^b, \dots, \mathbf{h}_t^b\big)^\top \in \mathbb{R}^{T \times d_h}$ ($\mathrm{H}^b$ is one of $\mathrm{A}^b$, $\mathrm{X}^b$ or $\mathrm{Y}^b$, depending on the subnetwork). Formally, we compute
\begin{align}
        & \operatorname{Attn}^{(i)}(Q^{(i)}, K^{(i)}, V^{(i)}) =  \operatorname{softmax}\Big(\frac{Q^{(i)}K^{(i)}{}^\top}{\sqrt{d_{qkv}}}\Big) V^{(i)} , \label{eq:attention}
    \\
        & Q^{(i)} = Q^{(i)}(\mathrm{H}^b) = \mathrm{H}^b \, W_Q^{(i)} + \mathbf{1} b_Q^{(i)}{}^\top , \\
        & K^{(i)} = K^{(i)}(\mathrm{H}^b) = \mathrm{H}^b \, W_K^{(i)} + \mathbf{1} b^{(i)}_K{}^\top , \\ 
        & V^{(i)} = V^{(i)}(\mathrm{H}^b) = \mathrm{H}^b \, W_V^{(i)} + \mathbf{1} b_V^{(i)}{}^\top ,
\end{align}
where $W_Q^{(i)}, W_K^{(i)}, W_V^{(i)} \in \mathbb{R}^{d_h \times d_{qkv}}$ and $b_Q^{(i)}$, $b_Q^{(i)}$, $b_V^{(i)} \in \mathbb{R}^{d_{qkv}}$ are parameters of a single attention head $i$, where $\operatorname{softmax}(\cdot)$ operates separately on each row, and where $\mathbf{1} \in \mathbb{R}^{d_{qkv}}$ is a vector of ones. We set the dimensionality of keys and queries to $d_{qkv} = d_{h} / n_h$, where $n_h$ is the number of heads.

The output of a multi-head attention is a concatenation of the different heads, \ie, 
\begin{equation}
    \operatorname{MHA}(Q, K, V) = \operatorname{Concat}(\operatorname{Attn}^{(1)}, \dots, \operatorname{Attn}^{(n_h)}) .
\end{equation}
Here, we simplified the original multi-head attention in \cite{vaswani2017attention} by omitting the final output projection layer after concatenation to reduce risk of overfitting.

In our \shortname, self-attention uses the sequence of hidden states from the same transformer subnetwork to infer keys, queries, and values, while cross-attention uses the sequence of hidden states of the other two transformer subnetworks as keys and values. We use multiple cross-attentions to exchange the information between parallel hidden states.\footnote{Different variants of combining multiple-input information with self- and cross-attentions were already studied in the context of multi-source translation, e.g., in \cite{libovicky2018input}. Our implementation is closest to parallel attention combination.} These are placed on top of the self-attention layers (see subdiagram in Fig.~\ref{fig:multi-input-transformer}). We add the representation vector of static covariates, $\tilde{\mathbf{V}}$ when pooling different cross-attention outputs. We mask hidden states for self- and cross-attentions by setting the attention logits in Eq.~\eqref{eq:attention} to $-\infty$. This ensures that information flows only from the current input to future hidden states (and not the other way around).

\underline{(ii) Feed-forward layer} ($\operatorname{FF}$) with ReLU activation is applied time-step-wise to the sequence of hidden states, \ie,
\begin{equation*}
    \operatorname{FF}(\mathbf{h}_t) = \operatorname{Linear} \big( \operatorname{ReLU}
     (\operatorname{Linear}(\mathbf{h}_t))\big),
\end{equation*}
where fully-connected linear layers are followed by dropout.

\underline{(iii) Layer normalization} ($\operatorname{LN}$) \cite{lei2016layer} and residual connections are added after each self- and cross-attention. We compute the layer normalization via
\begin{equation}
    \operatorname{LN}(\mathbf{h}_t) = \frac{\gamma}{\sigma} \odot (\mathbf{h}_t - \mu) + \beta ,
\end{equation}
\begin{equation}
    \mu = \frac{1}{d_h} \sum_{j=1}^{d_h} (\mathbf{h}_t)_j, \quad \sigma = \sqrt{\frac{1}{d_h} \sum_{j=1}^{d_h} \big((\mathbf{h}_t)_j - \mu \big)^2} ,
\end{equation}
where $\gamma, \beta \in \mathbb{R}^{d_h}$ are scale and shift parameters and where $\odot$ is an element-wise product. 

\textbf{Balanced representations.} The (balanced) representations are then constructed via average pooling over three (or two) parallel hidden states of the $B$-th transformer block. Thereby, we use a fully-connected linear layer and an exponential linear unit (ELU) non-linearity; \ie,
\begin{align}
\nonumber
    & \mathbf{\tilde{\Phi}}_i = 
    \begin{cases}
        \frac{1}{3}(\mathbf{A}_{i-1}^{B} + \mathbf{X}_i^{B} + \mathbf{Y}_i^{B}), & i \in \{1, \dots, t\} , \\
        \frac{1}{2}(\mathbf{a}_{i-1}^{B} + \hat{\mathbf{Y}}_i^{B}), & i \in \{t+1, \dots, t + \tau - 1\} ,
    \end{cases}  \\
    & \mathbf{\Phi}_t = \operatorname{ELU}(\operatorname{Linear}(\mathbf{\tilde{\Phi}}_t)) \label{eq:output-repr}
\end{align}
where fully-connected linear layer is followed by dropout, $\mathbf{\Phi}_t \in \mathbb{R}^{d_r}$ and $d_r$ is the dimensionality of the balanced representation.  

\subsection{Positional encoding} 

In order to preserve information about the order of hidden states, we make use of position encoding~(PE). This is especially relevant for clinical practice as it allows us to distinguish sequences such as, \eg, $\langle$treatment~A $\mapsto$ side-effect~S $\mapsto$ treatment~B$\rangle$ from $\langle$treatment~A $\mapsto$ treatment~B $\mapsto$ side-effect~S$\rangle$. 

We model information about relative positions in the input at time steps $j$  and $i$ with $0 \le j \le i \le t$ by a set of vectors $a^V_{ij}, a^K_{ij} \in \mathbb{R}^{d_{qkv}}$ \cite{shaw2018self}. Specifically, they are shaped in the form of Toeplitz matrices
\begin{align}
   & a^V_{ij} = w^V_{\operatorname{clip}(j-i, l_{\text{max}})}, \qquad a^K_{ij} = w^K_{\operatorname{clip}(j-i, l_{\text{max}})}, \\
   & \operatorname{clip}(x, l_{\text{max}}) = \max\{ -l_{\text{max}}, \min\{ l_{\text{max}}, x \}\}
\end{align}
with trainable weights $w^K_l, w^V_l \in \mathbb{R}^{d_{qkv}}$, for $l \in \{-l_{\text{max}}, \dots, 0\}$, and where $l_{\text{max}}$ is the maximum distinguishable distance in the relative PE. The above formalization ensures that we obtain \emph{relative} encodings, that is, our \shortname considers the distance between past or current position $j$ and current position $i$, but not the actual location. Furthermore, the current position $i$ attends only to past information or itself, and, thus, we never use $a^V_{ij}$ and $a^K_{ij}$ where $i < j$. As a result, there are only $(l_{\text{max}} + 1) \times d_{qkv}$ parameters to estimate.

We then use the relative PE to modify the self-attention operation (Eq.~\eqref{eq:attention}). Formally, we compute the attention scores via (indices of heads are dropped for clarity)
\begin{align}
    & (\operatorname{Attn}(Q, K, V))_i = \sum_{j=1}^t \alpha_{ij}(V_j + a_{ij}^V) , \\
    & \alpha_{ij} = \operatorname{softmax}_j \left(\frac{Q_i^\top (K_j + a_{ij}^K)}{\sqrt{d_{qkv}}} \right) , \label{eq:attn-relative-enc}
\end{align}
with attention scores $\alpha_{ij}$ and where $K_j$, $V_j$, and $Q_i$ are columns of corresponding matrices and where $\operatorname{softmax}_j$ operates with respect to index $j$. Cross-attention with PE is defined in an analogous way. In our \shortname, the attention scores are shared across all the heads and blocks, as well as the three different subnetworks.

In our \shortname, we use relative positional encodings \cite{shaw2018self} that are incorporated in every self- and cross-attention. This is different from the original transformer \cite{vaswani2017attention}, which used absolute positional encodings with fixed weights for the initial hidden states of the first transformer block (see Appendix~\ref{app:abs-pe} for details). However, relative PE is regarded as more robust and, further, suited for patient trajectories where the order of treatments and diagnoses is informative \cite{allam2021analyzing}, but not the absolute time step. Additionally, it allows for better generalization to unseen sequence lengths: for the ranges beyond the maximal distinguishable distance $l_{\text{max}}$, \shortname stops to distinguish the precise relative location of states and considers everything as distant past information. In line with this, our experiments later also confirm relative PE to be superior over absolute PE.

\begin{figure*}[tbp]
    \vskip -0.05in
    \centering
    \hfill
    \subfigure[One-step-ahead prediction]{\includegraphics[width=0.3\textwidth]{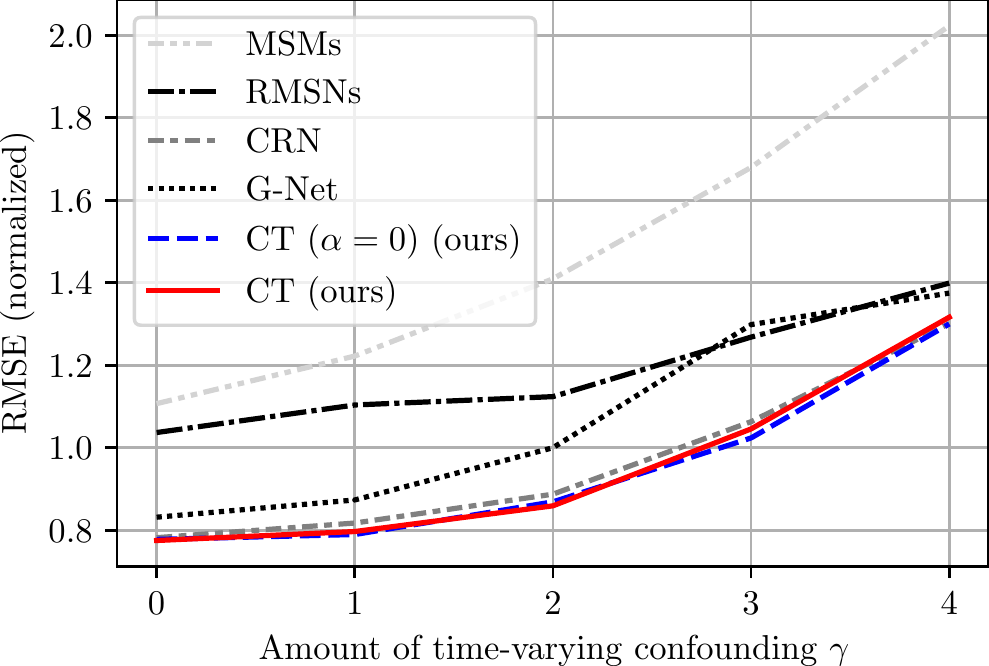}}\label{fig:results-tg-sim-one-step}
    \hfill
    \subfigure[$\tau$-step-ahead prediction (single sliding treatment).]{\includegraphics[width=0.3\textwidth]{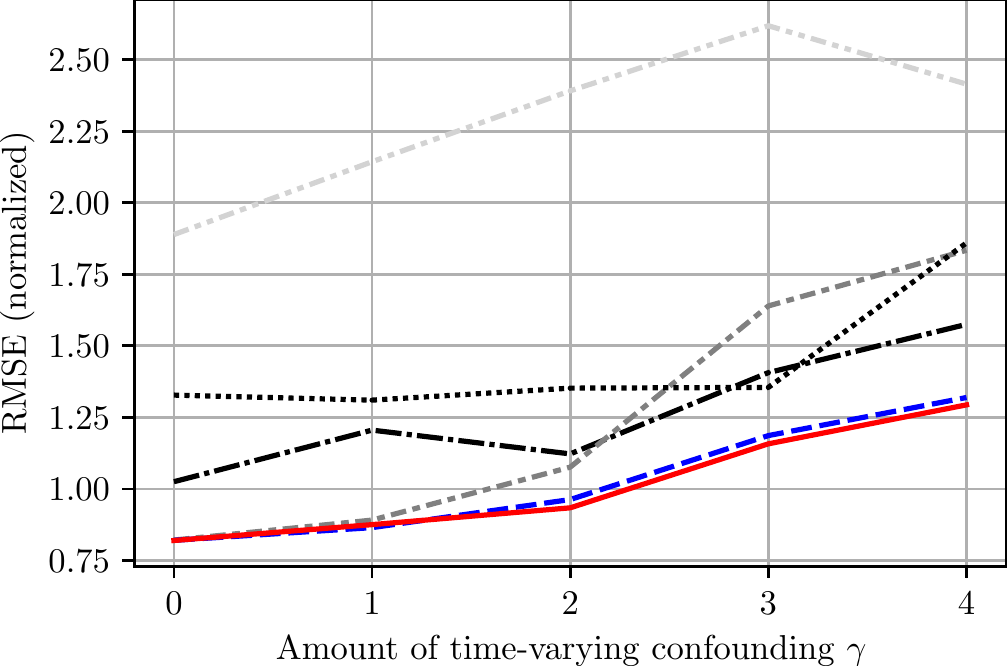}}\label{fig:results-tg-sim-six-step-timing}
    \hfill
    \subfigure[$\tau$-step-ahead prediction (random trajectories)]  {\includegraphics[width=0.3\textwidth]{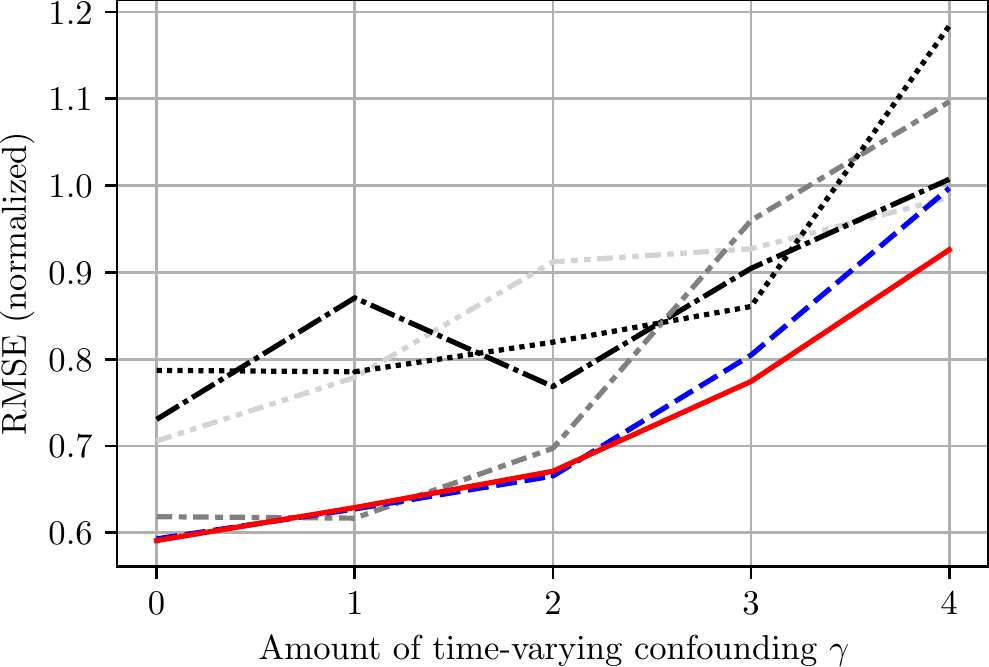}}\label{fig:results-tg-sim-six-step-rand}
    \hfill
    
    \vspace{-0.35cm}
    \caption{Results for fully-synthetic data based on tumor growth simulator (lower values are better). Shown is the mean performance averaged over five runs with different seeds. Here: $\tau = 6$.}
    \label{fig:results-tg-sim}
    \vskip -0.2in
\end{figure*}

\begin{table*}[tbp]
    \caption{Results for semi-synthetic data for $\tau$-step-ahead prediction based on real-world medical data (MIMIC-III). Shown: RMSE as mean $\pm$ standard deviation over five runs. Here: random trajectory setting. MSMs struggle for long prediction horizons with values $>$ 10.0 (due to linear modeling of IPTW scores).}
    \label{tab:ss-sim-all}
    \begin{center}
        \scriptsize
        \addtolength{\tabcolsep}{-1.4pt}  
\begin{tabular}{l|c|ccccccccc}
\toprule
{} &            $\tau = 1$ &            $\tau = 2$ &            $\tau = 3$ &            $\tau = 4$ &            $\tau = 5$ &            $\tau = 6$ &            $\tau = 7$ &            $\tau = 8$ &            $\tau = 9$ &           $\tau = 10$ \\
\midrule
MSMs \cite{robins2000marginal} &           0.37 ± 0.01 &           0.57 ± 0.03 &           0.74 ± 0.06 &           0.88 ± 0.03 &           1.14 ± 0.10 &           1.95 ± 1.48 &           3.44 ± 4.57 & $>$ 10.0 & $>$ 10.0 & $>$ 10.0 \\
RMSNs \cite{lim2018forecasting}                 &           0.24 ± 0.01 &           0.47 ± 0.01 &           0.60 ± 0.01 &           0.70 ± 0.02 &           0.78 ± 0.04 &           0.84 ± 0.05 &           0.89 ± 0.06 &           0.94 ± 0.08 &           0.97 ± 0.09 &           1.00 ± 0.11 \\
CRN \cite{bica2020estimating}                     &           0.30 ± 0.01 &           0.48 ± 0.02 &           0.59 ± 0.02 &           0.65 ± 0.02 &           0.68 ± 0.02 &           0.71 ± 0.01 &           0.72 ± 0.01 &           0.74 ± 0.01 &           0.76 ± 0.01 &           0.78 ± 0.02 \\
G-Net  \cite{li2021g}                 &           0.34 ± 0.01 &           0.67 ± 0.03 &           0.83 ± 0.04 &           0.94 ± 0.04 &           1.03 ± 0.05 &           1.10 ± 0.05 &           1.16 ± 0.05 &           1.21 ± 0.06 &           1.25 ± 0.06 &           1.29 ± 0.06 \\
\midrule
EDCT w/ GR ($\lambda = 1$) (\emph{ours}) &           0.29 ± 0.01 &           0.46 ± 0.01 &           0.56 ± 0.01 &           0.62 ± 0.01 &           0.67 ± 0.01 &           0.70 ± 0.01 &           0.72 ± 0.01 &           0.74 ± 0.01 &           0.76 ± 0.01 &           0.78 ± 0.01 \\
CT ($\alpha = 0$) (\emph{ours}) $^\ast$ &  \textbf{0.20 ± 0.01} &  \textbf{0.38 ± 0.01} &  \textbf{0.45 ± 0.01} &           0.50 ± 0.02 &  \textbf{0.52 ± 0.02} &           0.55 ± 0.02 &           0.56 ± 0.02 &           0.58 ± 0.02 &           0.60 ± 0.02 &           0.61 ± 0.02 \\
CT (\emph{ours})                &  \textbf{0.20 ± 0.01} &  \textbf{0.38 ± 0.01} &  \textbf{0.45 ± 0.01} &  \textbf{0.49 ± 0.01} &  \textbf{0.52 ± 0.02} &  \textbf{0.53 ± 0.02} &  \textbf{0.55 ± 0.02} &  \textbf{0.56 ± 0.02} &  \textbf{0.58 ± 0.02} &  \textbf{0.59 ± 0.02} \\
\bottomrule
\multicolumn{11}{l}{Lower $=$ better (best in bold)} \\
\multicolumn{5}{l}{$^\ast$ Identical hyperparameters as proposed \shortname for comparability}
\end{tabular}
\addtolength{\tabcolsep}{1.4pt}
    \end{center}
    \vskip -0.27in
\end{table*}

\subsection{Training of our \longname} \label{sub-sec:CT-training}

In our \shortname, we aim at two simultaneous objectives to address confounding bias: we aim at learning representations that are (a)~predictive of the next outcome and (b)~are non-predictive of the current treatment assignment. This thus naturally yields an adversarial objective. For this purpose, we make use of balanced representations, which we train via a novel \emph{counterfactual domain confusion (CDC) loss}.

\paragraph{Adversarial balanced representations.} 

As in \cite{bica2020estimating}, we build \emph{balanced} representations that allow us to achieve the adversarial objectives (a) and (b). For this, we put two fully-connected networks on top of the representation $\mathbf{\Phi}_t$, corresponding to the respective objectives: (a)~an outcome prediction network $G_Y$ and (b)~a treatment classifier network $G_A$.  Both receive the representation $\mathbf{\Phi}_t$ as input; the outcome prediction network additionally receives the current treatment $\mathbf{A}_t$. We implement both as single hidden layer fully-connected networks with number of units $n_{\text{FC}}$ and ELU activation. For notation, let $\theta_{Y}$ and $\theta_{A}$  denote the trainable parameters in $G_Y$ and $G_A$, respectively. Further, let $\theta_{R}$ denote all trainable parameters in \shortname for generating the representation $\mathbf{\Phi}_t$. 

\paragraph{Factual outcome loss.} For objective~(a), we fit the outcome prediction network $G_Y$, and thus $\mathbf{\Phi}_t$, by minimizing the factual loss of the next outcome. This can be done, \eg, via the mean squared error (MSE). We then yield
\begin{align}
    & \mathcal{L}_{G_Y} (\theta_Y, \theta_R) = \left\Vert \mathbf{Y}_{t+1} - G_Y\big(\mathbf{\Phi}_t(\theta_R), \mathbf{A}_t; \theta_Y \big) \right\Vert^2 .
\end{align}

\paragraph{CDC loss.} For objective~(b), we want to fit the treatment classifier network $G_A$, and thus the representation $\mathbf{\Phi}_t$, in way that it is non-predictive of the current treatment $\mathbf{A}_t$. To achieve this, we develop a novel CDC loss tailored for counterfactual inference. Our idea builds upon the domain confusion loss \cite{tzeng2015simultaneous} for handling adversarial objectives, which was previously used for unsupervised domain adaptation, whereas we adapt it specifically for counterfactual inference. 

Then, we fit $G_A$ so that it can predict the current treatment, \ie, via 
\begin{equation}
\label{eq:loss-ga}
\hspace{-0.3cm}
\mathcal{L}_{G_A} (\theta_A, \theta_R) = - \sum_{j=1}^{d_a} \mathbbm{1}_{[\mathbf{A}_t = a_j]} \log G_A (\mathbf{\Phi}_t(\theta_R); \theta_A) , 
\end{equation}
where $\mathbbm{1}_{[\cdot]}$ is the indicator function. This thus minimizes a classification loss of the current treatment assignment given $\mathbf{\Phi}_t$. However, while $G_A$ can predict the current treatment, the actual representation $\mathbf{\Phi}_t$ should not, and should rather be non-predictive. For this, we propose to minimize the cross-entropy between a uniform distribution over treatment categorical space and predictions of $G_A$ via
\begin{equation}
\label{eq:loss-conf}
\mathcal{L}_{\text{conf}} (\theta_A, \theta_R) = - \sum_{j=1}^{d_a} \frac{1}{d_a} \log G_A (\mathbf{\Phi}_t(\theta_R); \theta_A) ,
\end{equation}
thus achieving domain confusion.

\paragraph{Overall adversarial objective.} 

Using the above, \shortname is trained via
\begin{align}
\hspace{-0.3cm}    
(\hat{\theta}_Y, \hat{\theta}_R) & = \argmin_{\theta_Y, \theta_R} \mathcal{L}_{G_Y} (\theta_Y, \theta_R) + \alpha \mathcal{L}_{\text{conf}} (\hat{\theta}_A, \theta_R) , \label{eq:loss-yr}\\ 
    \hat{\theta}_A & = \argmin_{\theta_A} \alpha \mathcal{L}_{G_A} (\theta_A, \hat{\theta}_R)  , \label{eq:loss-a}
\end{align}
where $\alpha$ is a hyperparameter for domain confusion. Thereby, optimal values of $\hat{\theta}_Y$, $\hat{\theta}_R$ and $\hat{\theta}_A$ achieve an equilibrium between factual outcome prediction and domain confusion. In \shortname, we implement this by performing iterative updates of the parameters (rather than optimizing globally). Details are in Appendix~\ref{app:adv-training}.  

Previous work \cite{bica2020estimating} has addressed the above adversarial objective through gradient reversal \cite{ganin2015unsupervised}. However, this has two shortcomings: (i)~If the parameter $\lambda$ of gradient reversal becomes too large, the representation may be predictive of opposite treatment \cite{atan2018counterfactual}. (ii)~If the treatment classifier network learns too fast, gradients vanish and are not passed to representations, leading to poor fit \cite{tzeng2017adversarial}. Different from that, we propose a novel CDC loss. As we see later, our loss is highly effective: it even improves CRN \cite{bica2020estimating}, when replacing gradient reversal with our loss.

\paragraph{Stabilization.} 

We further stabilize the above adversarial training by employing exponential moving average~(EMA) of model parameters during training \cite{yaz2018unusual}. EMA helps to limit cycles of model parameters around the equilibrium with vanishing amplitude and thus accelerates overall convergence. We apply EMA to all trainable parameters (\ie, $\theta_Y$, $\theta_R$, $\theta_A$). Formally, we update parameters during training via 
\begin{equation}
\theta^{(i)}_{\text{EMA}} = \beta \, \theta^{(i - 1)}_{\text{EMA}} + (1 - \beta) \, \theta^{(i)} ,
\end{equation}
where superscripts $(i)$ refers to the different steps of the optimization algorithm, where $\beta$ is a exponential smoothing parameter, and where we initialize $\theta^{(0)}_{\text{EMA}} = \theta^{(0)}$. We provide pseudocode for an iterative gradient update in \shortname via EMA in Appendix \ref{app:adv-training}.

\paragraph{Attentional dropout.} To reduce the risk of overfitting between time steps, we implement attentional dropout via DropAttention \cite{zehui2019dropattention}. During training, attention scores $\alpha_{ij}$ in Eq.~\eqref{eq:attn-relative-enc} are element-wise randomly set to zero with probability $p$ (\ie, the dropout rate). However, we make a small simplification. We do not perform normalized rescaling \cite{zehui2019dropattention} of attention scores but opt for traditional dropout rescaling \cite{srivastava2014dropout}, as this resulted in more stable training for short-length sequences.

\paragraph{Mini-batch augmentation with masking.}

For training data $\mathcal{D}$, we always have access to the full time-series, that is, including all time-varying covariates $\mathbf{x}_{1}^{(i)}, \dots, \mathbf{x}_{T^{(i)}}^{(i)}$. However, upon deployment, these are no longer observable for $\tau$-step-ahead predictions with $\tau \ge 2$. To reflect this during training, we perform data augmentation at the mini-batch level. For this, we duplicate the training samples: We uniformly sample the length $1 \leq t_s \leq T^{(i)}$ of the masking window, and then create a duplicate data sample where the last $t_s$ time-varying covariates $\mathbf{x}_{t_s}^{(i)}, \dots, \mathbf{x}_{T^{(i)}}^{(i)}$ are masked by setting the corresponding attention logits of $\mathrm{H}^b = \mathrm{X}^b$ in Eq.~\eqref{eq:attention} to $-\infty$.

Mini-batch augmentation with masking allows us to train a single model for both one- and multiple-step-ahead prediction in end-to-end fashion. This distinguishes our \shortname from RMSNs and CRN, which are built on top of encoder-decoder architectures and trained in a multiple-stage procedure. Later, we also experiment with an encoder-decoder version of \shortname (i.e., a single-subnetwork variant) but find that it is inferior performance to our end-to-end model.

\subsection{Theoretical insights}

The following result provides a theoretical justification that our CDC loss indeed leads to balanced representations, and, thus, removes the bias induced by time-varying confounders.\footnote{Importantly, our
loss is different from gradient reversal (GR) in \cite{ganin2015unsupervised, bica2020estimating}. It builds balanced representations by minimizing \emph{reversed KL divergence} between the treatment-conditional distribution of representation and mixture of all treatment-conditional distributions.}
\begin{theorem}\label{thrm:domain_conf_loss_short}
We fix $t \in \mathbb{N}$ and define $P$ as the distribution of $\bar{\mathbf{H}}_t$, $P_j$ as the distribution of $\bar{\mathbf{H}}_t$ given $\mathbf{A}_t = a_j$, and $P^\Phi_j$ as the distribution of $\mathbf{\Phi}_t = \Phi(\bar{\mathbf{H}}_t)$ given $\mathbf{A}_t = a_j$ for all $j \in \{1, \dots, d_a\}$. Here, $\Phi(\cdot) = \Phi(\cdot; \theta_R)$ denotes any network that generates representations. Let $G^j_A$ denote the output of $G_A$ corresponding to treatment $a_j$. Then, there exists an optimal pair $(\Phi^\ast, G^\ast_A)$ such that
\begin{align}\
 \Phi^\ast &= \argmax_{\Phi} \sum_{j=1}^{d_a} \mathbb{E}_{\bar{\mathbf{H}}_t \sim P}\left[ \log{G^\ast}^j_A(\Phi(\bar{\mathbf{H}}_t) \right] \label{eq:phi-star}\\
 G^\ast_A &= \argmax_{G_A} \sum_{j=1}^{d_a} \mathbb{E}_{\bar{\mathbf{H}}_t \sim P_j}\left[\log {G}^j_A(\Phi^\ast(\bar{\mathbf{H}}_t) \right] \mathbb{P}(\mathbf{A}_t = a_j) \label{eq:ga-star}\\
& \text{subject to }  \sum_{i=1}^{d_a} {G}^i_A(\Phi^\ast(\bar{\mathbf{H}}_t)) = 1.
\end{align}
Furthermore, $\Phi^\ast$ satisfies Eq.~\eqref{eq:phi-star} if and only if it induces balanced representations across treatments, i.e., $P^{\Phi^\ast}_1 = \ldots = P^{\Phi^\ast}_{d_a}$.
\end{theorem}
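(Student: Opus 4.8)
The plan is to read Eqs.~\eqref{eq:phi-star}--\eqref{eq:ga-star} as the Nash-equilibrium conditions of the adversarial game defined by Eqs.~\eqref{eq:loss-ga}--\eqref{eq:loss-a} and to analyze them in the same spirit as the minimax-optimality analysis familiar from adversarial (GAN-style) training: first solve the inner problem for the treatment classifier with the representation held fixed, then substitute the optimal classifier back into the representation's objective and recognise the resulting functional as a divergence whose minimizers are exactly the balanced representations.

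For the inner problem I would fix $\Phi$ and write $\pi_j = \mathbb{P}(\mathbf{A}_t = a_j)$. Since $\Phi$ pushes the conditional law $P_j$ forward to $P^\Phi_j$ and the marginal $P$ forward to the mixture $P^\Phi = \sum_j \pi_j P^\Phi_j$, the objective in Eq.~\eqref{eq:ga-star} rewrites — assuming the $P^\Phi_j$ admit densities $p^\Phi_j$ with respect to a common dominating measure — as $\int \big(\sum_j \pi_j\,p^\Phi_j(\phi)\log G^j_A(\phi)\big)\dd\phi$, which may be maximized pointwise in $\phi$ over the probability simplex. A Lagrange-multiplier argument (equivalently, Gibbs' inequality) yields the unique maximizer $G^{\ast j}_A(\phi) = \pi_j p^\Phi_j(\phi) / \sum_i \pi_i p^\Phi_i(\phi) = \mathbb{P}(\mathbf{A}_t = a_j \mid \mathbf{\Phi}_t = \phi)$, i.e. the Bayes-optimal posterior; in particular any pair fulfilling Eq.~\eqref{eq:ga-star} must have $G^\ast_A$ equal to the posterior induced by $\Phi^\ast$.

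Next I would plug $G^\ast_A$ into the objective of Eq.~\eqref{eq:phi-star}. Using again that $\Phi$ pushes $P$ to $P^\Phi$, this objective becomes $\sum_j \mathbb{E}_{\phi\sim P^\Phi}\big[\log\tfrac{\pi_j p^\Phi_j(\phi)}{p^\Phi(\phi)}\big] = \sum_j \log\pi_j - \sum_j \mathrm{KL}\!\big(P^\Phi \,\|\, P^\Phi_j\big)$. The first term is constant in $\Phi$, and the second is a sum of the reversed KL divergences between the mixture and the treatment-conditional representation laws, matching the footnote. Each summand is nonnegative and vanishes iff $P^\Phi = P^\Phi_j$; since $P^\Phi = \sum_i \pi_i P^\Phi_i$ is precisely the $\pi$-mixture, the whole sum vanishes iff $P^\Phi_1 = \dots = P^\Phi_{d_a}$. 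Hence the objective of Eq.~\eqref{eq:phi-star} attains its maximal value $\sum_j\log\pi_j$ exactly at the balanced representations, which is the claimed equivalence. Existence of an optimal pair then follows because the balanced family is nonempty — e.g. any $\Phi$ that discards its input, so that all $P^\Phi_j$ collapse to one point mass — so the maximum is attained by some $\Phi^\ast$, and pairing it with its Bayes-optimal classifier (which for a balanced $\Phi^\ast$ is just the constant map $\phi\mapsto(\pi_1,\dots,\pi_{d_a})$) satisfies both equations simultaneously.

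The conceptual crux — and the step I expect to be the main obstacle — is the divergence identity: rewriting the substituted adversarial objective as $\sum_j\log\pi_j - \sum_j \mathrm{KL}(P^\Phi\|P^\Phi_j)$ and reading off the equality case. The remaining difficulties are essentially bookkeeping: justifying the existence of densities (or phrasing everything through Radon--Nikodym derivatives), and reconciling the coupled statement — where $G^\ast_A$ in Eq.~\eqref{eq:phi-star} is a \emph{fixed} function — with the divergence identity, which is cleanest when the classifier is re-optimized for every candidate $\Phi$; this is handled by the observation above that at any equilibrium $G^\ast_A$ coincides with the posterior of $\Phi^\ast$. As in GAN-style analyses, one works ``in the non-parametric limit'' for $\Phi$, i.e. over the family of push-forward distributions it can realise.
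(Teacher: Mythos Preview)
Your proposal is correct and follows essentially the same route as the paper's own proof: first derive the Bayes-optimal classifier $G^{\ast j}_A(\phi)=\pi_j p^\Phi_j(\phi)/\sum_i\pi_i p^\Phi_i(\phi)$ for fixed $\Phi$ via a pointwise Lagrange-multiplier argument, then substitute it into the $\Phi$-objective to obtain $\sum_j\log\pi_j-\sum_j\mathrm{KL}(P^\Phi\,\|\,P^\Phi_j)$ and read off that the maximum is attained iff the representations are balanced. Your treatment is in fact slightly more careful than the paper's in two respects---you explicitly address existence (via the degenerate constant representation) and you flag the coupling subtlety between the fixed $G^\ast_A$ in Eq.~\eqref{eq:phi-star} and the re-optimized classifier used in the divergence computation---but the key steps and the central divergence identity are identical.
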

\begin{proof}
See Appendix~\ref{app:proof}.
\end{proof}

Further, it can be easily shown that objectives \eqref{eq:loss-ga} and \eqref{eq:loss-conf} are exactly finite sample versions of \eqref{eq:ga-star} and \eqref{eq:phi-star} from Theorem~\ref{thrm:domain_conf_loss_short}, respectively.

\subsection{Implementation}

\paragraph{Training.} We implemented \shortname in PyTorch Lightning. We trained \shortname using Adam \cite{kingma2014adam} with learning rate $\eta$ and number of epochs $n_e$. The dropout rate $p$ was kept the same for both feed-forward layers and DropAttention (we call it sequential dropout rate). We employed the teacher forcing technique \cite{williams1989learning}. During evaluation of multiple-step-ahead prediction, we switch off teacher forcing and autoregressively feed model predictions. For the parameters $\alpha$ and $\beta$ of adversarial training, we choose values $\beta = 0.99$ and $\alpha = 0.01$ as in the original works \cite{tzeng2015simultaneous,yaz2018unusual}, which also performed well in our experiments. We additionally perform an exponential rise of $\alpha$ during training. 

\paragraph{Hyperparameter tuning.}  $p$, $\eta$, and all other hyperparameters (number of blocks $B$, minibatch size, number of attention heads $n_h$, size of hidden units $d_h$, size of balanced representation $d_r$, size of hidden units in fully-connected networks $n_{\text{FC}}$) are subject to hyperparameter tuning. Details are in Appendix~\ref{app:hparams}.

\begin{table}[tbp]
    \vskip -0.075in
    \caption{Results for experiments with real-world medical data (MIMIC-III). Shown: RMSE as mean $\pm$ standard deviation over five runs.}
    \label{tab:mimic-real-sim-all}
    \begin{center}
        \tiny
        \addtolength{\tabcolsep}{-1.5pt}  
\begin{tabular}{l|c|cccc}
\toprule
{} &            $\tau = 1$ &            $\tau = 2$ &            $\tau = 3$ &            $\tau = 4$ &             $\tau = 5$ \\
\midrule
MSMs       &           6.37 ± 0.26 &           9.06 ± 0.41 &          11.89 ± 1.28 &          13.12 ± 1.25 &           14.44 ± 1.12 \\
RMSNs      &           5.20 ± 0.15 &           9.79 ± 0.31 &          10.52 ± 0.39 &          11.09 ± 0.49 &           11.64 ± 0.62 \\
CRN       &  4.84 ± 0.08 &           9.15 ± 0.16 &           9.81 ± 0.17 &          10.15 ± 0.19 &           10.40 ± 0.21 \\
G-Net     &           5.13 ± 0.05 &          11.88 ± 0.20 &          12.91 ± 0.26 &          13.57 ± 0.30 &           14.08 ± 0.31 \\
\midrule
CT (\emph{ours}) &           \textbf{4.59 ± 0.09} &  \textbf{8.99 ± 0.21} &  \textbf{9.59 ± 0.22} &  \textbf{9.91 ± 0.26} &  \textbf{10.14 ± 0.29} \\
\bottomrule
\multicolumn{6}{l}{Lower $=$ better (best in bold) }
\end{tabular}
\addtolength{\tabcolsep}{1.5pt}  
    \end{center}
    \vskip -0.29in
\end{table}

\begin{table}[tbp]
    \caption{Ablation study for proposed \shortname (with CDC loss, $\alpha = 0.01$, $\beta = 0.99$). Reported: normalized RMSE of \shortname with relative changes.}
    \label{tab:ablation-study}
    \begin{center}
        \scriptsize
        \addtolength{\tabcolsep}{-1.6pt}  
        \begin{tabular}{l|l|rr|rr}
\toprule
\multicolumn{2}{c|}{} & \multicolumn{2}{c|}{$\tau$ = 1} & \multicolumn{2}{c}{$\tau$ = 6} \\
\cmidrule(lr){3-4}\cmidrule(lr){5-6}
\multicolumn{2}{c|}{}  &            $\gamma$ = 1 &             $\gamma$ = 4 &            $\gamma$ = 1 &             $\gamma$ = 4 \\
\midrule

\multicolumn{2}{l|}{CT (proposed)} &                     0.80 &                     1.32 &                    0.63 &                     0.93 \\
\multirow{4}{*}{\textsf{a}} & w/ non-trainable PE$^*$                                  &   \textcolor{gray}{$\pm$0.00} &  \textcolor{ForestGreen}{$-$0.02} &  \textcolor{BrickRed}{$+$0.01} &  \textcolor{ForestGreen}{$-$0.03} \\
& w/ absolute PE$^*$                                       &  \textcolor{BrickRed}{$+$0.04} &   \textcolor{BrickRed}{$+$0.16} &  \textcolor{BrickRed}{$+$0.15} &    \textcolor{BrickRed}{$+$1.00} \\
& w/o attentional dropout$^*$                            &  \textcolor{gray}{$\pm$0.00} &   \textcolor{BrickRed}{$+$0.07} &   \textcolor{BrickRed}{$+$0.00} &   \textcolor{BrickRed}{$+$0.09} \\
\vspace{0.9mm}
& w/o cross-attention$^*$                                      &   \textcolor{BrickRed}{$+$0.03} &   \textcolor{BrickRed}{$+$0.16} &   \textcolor{BrickRed}{$+$0.06} &    \textcolor{BrickRed}{$+$0.10} \\

\multirow{3}{*}{\textsf{b}} & w/o EMA ($\beta = 0$)$^*$                                       &  \textcolor{BrickRed}{$+$0.03} &   \textcolor{BrickRed}{$+$0.38} &  \textcolor{BrickRed}{$+$0.03} &   \textcolor{BrickRed}{$+$0.33} \\
& w/o balancing ($\alpha$ = 0; $\beta$ = 0.99)$^*$                  &  \textcolor{ForestGreen}{$-$0.01} &  \textcolor{ForestGreen}{$-$0.02} &   \textcolor{gray}{$\pm$0.00} &   \textcolor{BrickRed}{$+$0.07} \\
\vspace{0.9mm}
& w/ GR ($\lambda$ = 1) &  \textcolor{BrickRed}{$+$0.02} &   \textcolor{BrickRed}{$+$0.17} &  \textcolor{BrickRed}{$+$0.08} &   \textcolor{BrickRed}{$+$0.33} \\

\multirow{2}{*}{\textsf{c}} & EDCT w/ GR ($\lambda$ = 1) &  \textcolor{BrickRed}{$+$0.16} &   \textcolor{BrickRed}{$+$0.08} &  \textcolor{BrickRed}{$+$0.05} &   \textcolor{BrickRed}{$+$0.23} \\
& EDCT w/ DC ($\alpha$ = 0.01; $\beta$ = 0.99) &  \textcolor{ForestGreen}{$-$0.03} &    \textcolor{BrickRed}{$+$0.10} &  \textcolor{ForestGreen}{$-$0.03} &   \textcolor{BrickRed}{$+$0.23} \\
\bottomrule
\multicolumn{5}{l}{Lower = better;} \\
\multicolumn{5}{l}{Improvement over \shortname in \textcolor{ForestGreen}{green}, worse performance in \textcolor{BrickRed}{red}} \\
\multicolumn{5}{l}{$^\ast$ Identical hyperparameters as proposed \shortname for comparability}
\end{tabular}

        \addtolength{\tabcolsep}{1.6pt}  
    \end{center}
    \vskip -0.29in
\end{table}

\begin{table*}[tp]
    \vskip -0.075in
    \caption{CRN with different training procedures. Results for fully-synthetic data based on tumor growth simulator (here: $\gamma = 4$).}
    \label{tab:ablation-study-crn}
    \begin{center}
        \scriptsize
        \begin{tabular}{l|c|ccccc}
\toprule
{} &            $\tau$ = 1 &            $\tau$ = 2 &            $\tau$ = 3 &            $\tau$ = 4 &            $\tau$ = 5 &            $\tau$ = 6 \\
\midrule
CRN + original GR ($\lambda = 1$) as in \cite{bica2020estimating}        &  \textbf{1.30 ± 0.14} &  \textbf{1.12 ± 0.25} &           1.23 ± 0.32 &           1.23 ± 0.34 &           1.17 ± 0.34 &           1.10 ± 0.32 \\
CRN + our counterfactual DC loss ($\alpha = 0.01$; $\beta = 0.99$) &           1.33 ± 0.21 &           1.18 ± 0.31 &  \textbf{1.19 ± 0.36} &  \textbf{1.12 ± 0.35} &  \textbf{1.03 ± 0.33} &  \textbf{0.93 ± 0.31} \\
\bottomrule
\multicolumn{7}{l}{Lower $=$ better (best in bold) }
\end{tabular}

    \end{center}
    \vskip -0.25in
\end{table*}

\section{Experiments}

To demonstrate the effectiveness of our \shortname, we make use of synthetic datasets. Thereby, we follow common practice in benchmarking for counterfactual inference \cite{lim2018forecasting,bica2020estimating,li2021g}. For real datasets, the true counterfactual outcomes are typically unknown. By using \mbox{(semi-)}synthetic datasets, we can compute the true counterfactuals and thus validate our \shortname.    

\paragraph{Baselines.} The chosen baselines are identical to those in previous, state-of-the-art literature for estimating counterfactual outcomes over time \cite{lim2018forecasting,bica2020estimating,li2021g}. These are: \textbf{MSMs}~\cite{robins2000marginal,hernan2001marginal}, \textbf{RMSNs}~\cite{lim2018forecasting}, \textbf{CRN}~\cite{bica2020estimating}, and \textbf{G-Net} \cite{li2021g}. Details are in Appendix~\ref{app:baselines}. For comparability, we use the same hyperparameter tuning for the baselines as for \shortname (see Appendix~\ref{app:hparams}).

\subsection{Experiments with fully-synthetic data} \label{sec:tg-sim}

\paragraph{Data.} We build upon the pharmacokinetic-pharmacodynamic model of tumor growth \cite{geng2017prediction}. It provides a state-of-the-art biomedical model to simulate the effects of lung cancer treatments over time. The same model was previously used for evaluating RMSNs \cite{lim2018forecasting} and CRN \cite{bica2020estimating}. For $\tau$-step-ahead prediction, we distinguish two settings: (i)~``single sliding treatment'' where trajectories involve only a single treatment as in \citep{bica2020estimating}; and (ii)~``random trajectories'' where one or more treatments are assigned. We simulate patient trajectories for different amounts of confounding $\gamma$. Further details are in Appendix~\ref{app:syn}. Here, and in all following experiments, we apply hyperparameter tuning (see Appendix~\ref{app:hparams}).

\paragraph{Results.} Fig.~\ref{fig:results-tg-sim} shows the results. We see a notable performance gain for our \shortname over the state-of-the-art baselines, especially pronounced for larger confounding $\gamma$ and larger $\tau$. Overall, \shortname is superior by a large margin. 

Fig.~\ref{fig:results-tg-sim} also shows a \shortname variant in which we removed the CDC loss by setting $\alpha$ to zero, called \shortname($\alpha = 0$). For comparability, we keep the hyperparameters as in the original \shortname. The results demonstrate the effectiveness of the proposed CDC loss, especially for multi-step-ahead prediction. \shortname also provides a significant runtime speedup in comparison to other neural network methods, mainly due to faster processing of sequential data with self- and cross-attentions, and single-stage end-to-end training (see exact runtime and model size comparison in Appendix~\ref{app:runtime}). We plotted t-SNE embeddings of the balanced representations (Appendix~\ref{app:t-sne}) to exemplify how balancing works.

\subsection{Experiments with semi-synthetic data}

\paragraph{Data.} We create a semi-synthetic dataset based on real-world medical data from intensive care units. This allows us to validate our \shortname with high-dimensional, long-range patient trajectories. For this, we use the MIMIC-III dataset \cite{johnson2016mimic}. Building upon the ideas of \cite{schulam2017reliable}, we then generate patient trajectories with outcomes under endogeneous and exogeneous dependencies while considering treatment effects. Thereby, we can again control for the amount of confounding. Details are in Appendix~\ref{app:ss-sim}. Importantly, we again have access to the ground-truth counterfactuals for evaluation.

\paragraph{Results.} Table~\ref{tab:ss-sim-all} shows the results. Again, \shortname has a consistent and large improvement across all projection horizons $\tau$ (average improvement over baselines: 38.5\%). By comparing our \shortname against \shortname($\alpha = 0$), we see clear performance gains, demonstrating the benefit of our CDC loss. Additionally, we separately fitted an encoder-decoder architecture, namely \emph{Encoder-Decoder} \longname (EDCT). This approach leverages a single-subnetwork architecture, where all three sequences are fed into a single subnetwork (as opposed to three separate networks as in our \shortname). Further, the EDCT leverages the existing GR loss from \cite{bica2020estimating} and the similar encoder-decoder two-stage training. Details on this EDCT model are in Appendix~\ref{app:EDCT}. Here, we find that, for superior performance, it is crucial to combine both three-subnetwork architecture \underline{and} our CDC loss. 

Semi-synthetic data is also used for a case study, where we study the importance of each subnetwork. See Appendix~\ref{app:subnetwork-importance}.

\subsection{Experiments with real-world data}

\paragraph{Data.} We now demonstrate the applicability of our \shortname to real-world data and, for this, use intensive care unit stays in MIMIC-III \cite{johnson2016mimic}. We use the same 25 vital signs and 3 static features. We use (diastolic) blood pressure as an outcome and consider two treatments: vasopressors and mechanical ventilation, similar to \cite{kuzmanovic2021deconfounding, hatt2021sequential}. Prediction of blood pressure is crucial for critical care, \eg, to avoid tissue hypoperfusion \cite{vincent2018mean}. The application of vasopressors is highly confounded by previous and current levels of blood pressure, as they aim to raise low blood pressure. So far, an optimal administration of vasopressors is not fully understood \cite{subramanian2008liberal}, and, hence, it is important for medical practitioners to have individualized counterfactual predictions. Experiment details are in Appendix~\ref{app:real-world-data}. 

\paragraph{Results.} Because we no longer have access to the true counterfactuals, we now report the performance of predicting factual outcomes; see Table~\ref{tab:mimic-real-sim-all}. All state-of-the-art baselines are outperformed by our \shortname. This demonstrates the superiority of our proposed model.

\subsection{Ablation study} \label{sec:ablation}

We performed an extensive ablation study (Table~\ref{tab:ablation-study}) using full-synthetic data (setting: random trajectories) to confirm the effectiveness of the different components inside the subnetworks, the CDC loss, and the subnetwork architecture. We grouped these into categories. \textsf{a} varies different components within the subnetworks. Here, we replace trainable relative positional encoding (PE) with non-trainable relative PE, generated as described in Appendix~\ref{app:abs-pe}. Further, we replace our PE with a trainable absolute PE as in the original transformer \cite{vaswani2017attention}. Finally, we remove attentional dropout as well as cross-attention layers for all subnetworks. \textsf{b} varies the loss. Here, we remove EMA of model weights; switch off adversarial balancing, but not EMA; and replace our CDC loss with gradient reversal (GR) as in \cite{bica2020estimating}. \textsf{c} evaluates a single-subnetwork version of \shortname. We refer to this as EDCT (see Appendix~\ref{app:EDCT} for details). It thus has an encoder-decoder architecture which we train with either our CDC loss or GR.

Overall, we see that the combination of both our novel architecture based three-subnetworks \underline{and} our novel DC loss is crucial. This observation is particularly pronounced for a long prediction horizon ($\tau = 6$), where our proposed \shortname achieves the best performance. Notably, the main insight here is: simply switching the backbone from LSTM to transformer and using gradient reversal as in \cite{bica2020estimating} gives unstable results (see ``EDCT w/ GR ($\lambda$ = 1)``). Furthermore, our three-subnetworks \shortname with GR loss performs even worse (see ablation ``w/ GR ($\lambda$ = 1)``). 

To further demonstrate the effectiveness of our novel CDC loss, we perform an additional test based on the fully-synthetic dataset (Table~\ref{tab:ablation-study-crn}). We use (i)~a CRN with GR as in \cite{bica2020estimating}. We compare it with (ii)~a CRN trained with our proposed CDC loss (implementation details in Appendix~\ref{app:baselines}). Evidently, our loss also helps the CRN to achieve a better RMSE.


\section{Conclusion}

For personalized medicine, estimates of the counterfactual outcomes for patient trajectories are needed. Here, we proposed a novel, state-of-the-art method: the \longname which is designed to capture complex, long-range patient trajectories. It combines a custom subnetwork architecture to process the input together with a new counterfactual domain confusion loss for end-to-end training. Across extensive experiments, our \longname achieves state-of-the-art performance.



\FloatBarrier
\bibliography{literature}
\bibliographystyle{icml2022}

\newpage
\appendix
\onecolumn

\section{Assumptions for Causal Identification} 
\label{app:assumptions}

We build upon the potential outcomes framework \cite{neyman1923application,rubin1978bayesian} and its extension to time-varying treatments and outcomes \cite{robins2009estimation}. The potential outcomes framework has been widely used in earlier works with a similar objective as ours \cite{robins2009estimation, lim2018forecasting, bica2020estimating}. To this end, three standard assumptions for data generating mechanism are needed to identify a counterfactual outcome distribution over time, or, specifically, average $\tau$-step-ahead potential outcome conditioned on history from Eq.~\eqref{eq:estimand}: 
\begin{assumption}
    \label{ass:consistency}
    \textbf{(Consistency)}. If $\bar{\mathbf{A}}_{t} = \bar{\mathbf{a}}_{t}$ is a given sequence of treatments for some patient, then $\mathbf{Y}_{t+1}[\bar{\mathbf{a}}_{t}] = \mathbf{Y}_{t+1}$. This means that the potential outcome under treatment sequence $\bar{\mathbf{a}}_{t}$ coincides for the patient with the observed (factual) outcome, conditional on $\bar{\mathbf{A}}_{t} = \bar{\mathbf{a}}_{t}$.
\end{assumption}
\begin{assumption}
    \label{ass:so}
    \textbf{(Sequential Overlap)}. There is always a non-zero probability of receiving/not receiving any treatment for all the history space over time: $0 < \mathbb{P}(\mathbf{A}_{t} = \mathbf{a}_{t}  \;\mid\;  \bar{\mathbf{H}}_{t} = \bar{\mathbf{h}}_{t}) < 1$, if $\mathbb{P}(\bar{\mathbf{H}}_{t} = \bar{\mathbf{h}}_{t}) > 0$, where $\bar{\mathbf{h}}_{t}$ is some realization of a patient history.
\end{assumption}
\begin{assumption}\label{ass:seq_ignor}
    \label{ass:si}
    \textbf{(Sequential Ignorability)} or No Unobserved Confounding. The current treatment is independent of the potential outcome, conditioning on the observed history: $\mathbf{A}_{t} \ind \mathbf{Y}_{t+1}[\mathbf{a}_t] \,\mid\, \bar{\mathbf{H}}_{t}, \quad \forall \mathbf{a}_t$. This implies that there are no unobserved confounders that affect both treatment and outcome. 
\end{assumption}

The data generating mechanism for $\mathcal{D}$ is shown in Figure~\ref{fig:causal_diagram}. 

\begin{figure}[ht]
    \centering
    \includegraphics[width=0.95\textwidth]{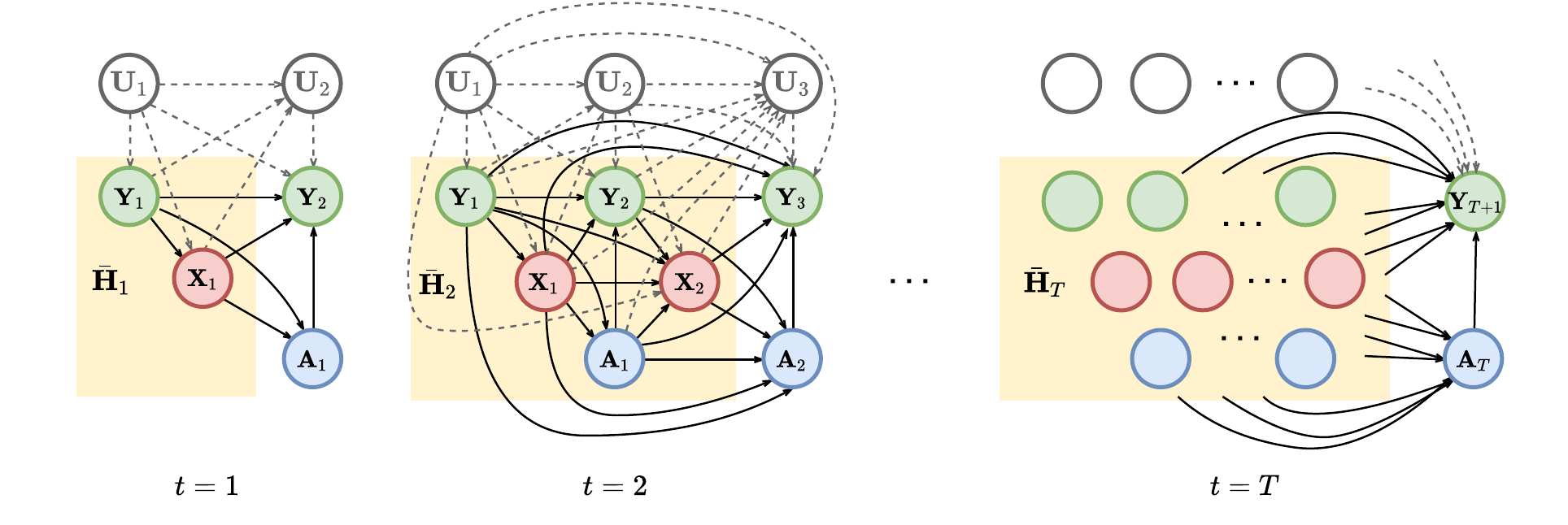}
    \caption{Causal diagram for data generating mechanism, depicted for different time steps $t$. $\mathbf{U}_t$ is unobserved exogenous noise, which only affects time-varying covariates and outcomes, but not treatments. All time-varying confounders up to time $t$ are included in the observed history $\bar{\mathbf{H}}_t$. Static covariates are ignored for the simplicity.}
    \label{fig:causal_diagram}
\end{figure}

\begin{corollary} (G-computation \cite{robins1986new}). Assumptions \ref{ass:consistency}--\ref{ass:si} provide sufficient identifiability conditions for Eq.~\eqref{eq:estimand}, e.g., with the G-computation 

\begin{equation}
\label{eq:gformula}
\begin{split}
       & \mathbb{E} \left(\mathbf{Y}_{t + \tau}[\bar{\mathbf{a}}_{t: t+\tau-1}] \mid \bar{\mathbf{H}}_t\right) = \int_{\mathbb{R}^{d_x} \times \dots \times \mathbb{R}^{d_x}}  \mathbb{E} \left(\mathbf{Y}_{t+\tau} \;\middle|\; \bar{\mathbf{H}}_t, \bar{\mathbf{x}}_{t+1:t+\tau-1}, \bar{\mathbf{y}}_{t+1:t+\tau-1}, \bar{\mathbf{a}}_{t: t+\tau-1} \right) \times \\ 
       & \prod_{j=t+1}^{t+\tau-1}  \mathbb{P} \left(\mathbf{x}_{j} \mathbf{y}_{j} \mid  \bar{\mathbf{H}}_t, \bar{\mathbf{x}}_{t+1:j-1}, \bar{\mathbf{y}}_{t+1:j-1}, \bar{\mathbf{a}}_{t:j-1} \right) \; \dd\bar{\mathbf{x}}_{t+1:t+\tau-1} \dd\bar{\mathbf{y}}_{t+1:t+\tau-1}
\end{split}
\end{equation}
\end{corollary}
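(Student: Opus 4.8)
\section*{Proof proposal}

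The plan is to prove Eq.~\eqref{eq:gformula} by induction on the projection horizon $\tau$, peeling off one time step at a time and invoking the three identifiability assumptions at each stage. The recursion telescopes so that each inductive step produces exactly one factor of the product $\prod_{j=t+1}^{t+\tau-1} \mathbb{P}(\mathbf{x}_j \mathbf{y}_j \mid \cdots)$ together with one pair of integration variables $\dd\mathbf{x}_j \dd\mathbf{y}_j$, while the innermost factual conditional expectation $\mathbb{E}(\mathbf{Y}_{t+\tau} \mid \bar{\mathbf{H}}_t, \bar{\mathbf{x}}_{t+1:t+\tau-1}, \bar{\mathbf{y}}_{t+1:t+\tau-1}, \bar{\mathbf{a}}_{t:t+\tau-1})$ remains after all steps are unwound.

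For the base case $\tau = 1$, the product and the integral are both empty, so the claimed formula reduces to $\mathbb{E}(\mathbf{Y}_{t+1}[\mathbf{a}_t] \mid \bar{\mathbf{H}}_t) = \mathbb{E}(\mathbf{Y}_{t+1} \mid \bar{\mathbf{H}}_t, \mathbf{A}_t = \mathbf{a}_t)$. First I would use Assumption~\ref{ass:si} (sequential ignorability), which gives $\mathbf{Y}_{t+1}[\mathbf{a}_t] \ind \mathbf{A}_t \mid \bar{\mathbf{H}}_t$, to add the conditioning event $\mathbf{A}_t = \mathbf{a}_t$ without changing the conditional expectation; Assumption~\ref{ass:so} (sequential overlap) guarantees this event has positive probability so the conditional is well defined. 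Then Assumption~\ref{ass:consistency} (consistency) replaces the potential outcome $\mathbf{Y}_{t+1}[\mathbf{a}_t]$ by the factual outcome $\mathbf{Y}_{t+1}$ on the event $\{\mathbf{A}_t = \mathbf{a}_t\}$, which closes the base case.

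For the inductive step, assuming the formula for horizon $\tau - 1$, I would first apply the tower property by conditioning on the time-$(t+1)$ covariate and outcome, writing $\mathbb{E}(\mathbf{Y}_{t+\tau}[\bar{\mathbf{a}}_{t:t+\tau-1}] \mid \bar{\mathbf{H}}_t)$ as an outer expectation over $(\mathbf{X}_{t+1}, \mathbf{Y}_{t+1})$ of an inner conditional expectation. Sequential ignorability at step $t$ lets me introduce $\mathbf{A}_t = \mathbf{a}_t$ into the conditioning set, and consistency then identifies the law of the intermediate pair with the factual conditional $\mathbb{P}(\mathbf{x}_{t+1}, \mathbf{y}_{t+1} \mid \bar{\mathbf{H}}_t, \mathbf{a}_t)$, which is precisely the $j = t+1$ product factor. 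Conditional on this enlarged, now fully factual history $\bar{\mathbf{H}}_{t+1}$, the remaining intervention $\bar{\mathbf{a}}_{t+1:t+\tau-1}$ has horizon $\tau - 1$, so consistency reduces the nested potential outcome $\mathbf{Y}_{t+\tau}[\bar{\mathbf{a}}_{t:t+\tau-1}]$ to $\mathbf{Y}_{t+\tau}[\bar{\mathbf{a}}_{t+1:t+\tau-1}]$, and the inductive hypothesis supplies the remaining product factors and integrals. Collecting the outer integration over $(\mathbf{x}_{t+1}, \mathbf{y}_{t+1})$ together with the inherited ones yields exactly Eq.~\eqref{eq:gformula}.

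The main obstacle I anticipate is the careful bookkeeping of nested potential outcomes in the inductive step: I must verify that after conditioning on the factually observed intermediates $(\mathbf{X}_{t+1}, \mathbf{Y}_{t+1})$ under $\mathbf{A}_t = \mathbf{a}_t$, the three assumptions re-apply verbatim to the shifted history $\bar{\mathbf{H}}_{t+1}$, so that the recursion is legitimate. Concretely, this requires that sequential ignorability and overlap are assumed to hold at \emph{every} time step --- as they are, since Assumptions~\ref{ass:so}--\ref{ass:si} are stated for generic $t$ --- and that consistency composes across steps, i.e., applying $\mathbf{a}_t$ first and then $\bar{\mathbf{a}}_{t+1:t+\tau-1}$ yields the same potential outcome as the full sequence $\bar{\mathbf{a}}_{t:t+\tau-1}$. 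Once this sequential consistency is made explicit, the remaining manipulations are routine applications of the tower property and change of conditioning.
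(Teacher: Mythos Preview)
The paper does not supply its own proof of this corollary; it is stated with a citation to \cite{robins1986new} and treated as a known identification result. Your induction-on-$\tau$ argument is the standard derivation of the G-formula and is correct: the base case and the inductive step each invoke sequential ignorability to introduce $\mathbf{A}_t = \mathbf{a}_t$, overlap to ensure the conditional is well defined, and consistency to pass from potential to factual quantities, with the tower property generating the product of intermediate densities. The only delicate point you already flag---that consistency composes across steps so that $\mathbf{Y}_{t+\tau}[\bar{\mathbf{a}}_{t:t+\tau-1}]$ reduces to $\mathbf{Y}_{t+\tau}[\bar{\mathbf{a}}_{t+1:t+\tau-1}]$ on the event $\{\mathbf{A}_t = \mathbf{a}_t\}$---is exactly the recursive (``nested'') potential-outcome structure underlying Robins' original argument, and once stated your recursion goes through.
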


Empirical G-computation is used by G-Net \cite{li2021g}, but requires the estimation of conditional distributions of time-varying covariates. This could be particularly challenging, given a finite dataset size and high dimensionality of covariates. Thus, we refrain from explicit usage of G-computation.  

\clearpage
\section{Methods for Estimating Counterfactual Outcomes over Time}  \label{app:methods-table}

In Table~\ref{tab:methods-comparison}, we provide an overview of the machine-learning-based methods for estimating counterfactual outcomes over time. For our experiments, we selected \textbf{MSMs} \cite{robins2000marginal,hernan2001marginal}, as the simplest linear baseline, and three state-of-the-art methods: \textbf{RMSNs} \cite{lim2018forecasting}, \textbf{CRN} \cite{bica2020estimating}, and \textbf{G-Net} \cite{li2021g}). Importantly, our choice of baselines is thus analogous to the those in the state-of-the-art literature \cite{lim2018forecasting,bica2020estimating,li2021g}. Below, we provide details why certain works are not of fit for our setting and are thus not applicable as baselines. Here, we again emphasize that this selection is again consistent with the literature \cite{lim2018forecasting,bica2020estimating,li2021g}.

One stream of the literature focuses on non- or semi-parametric methods \cite{Xu16,schulam2017reliable,soleimani2017treatment}. These are, for example, based on Gaussian processes (GPs). However, the aforementioned methods have three limitations: (1)~They are not designed to handle static covariates. As such, risk factors (e.g., age, gender) that are standard in any electronic health record must be excluded. This omits a substantial heterogeneity in any patient cohort, and is thus impractical. (2)~These methods cannot handle multiple outcomes. (3)~Due to the non-parametric nature of their estimation, these methods typically cannot scale to large-scale datasets. Contrary to that, several methods have been built that overcome these limitations, namely \textbf{RMSNs} \cite{lim2018forecasting}, \textbf{CRN} \cite{bica2020estimating}, and \textbf{G-Net} \cite{li2021g}, which we included as baselines.

We excluded two additional methods as these do not match our setting: 
\begin{itemize}
\item SyncTwin \cite{qian2021synctwin} is a semi-parametric method using synthetic control, but is limited to a single-time binary treatment and, therefore, not applicable to our setting.
\item DCRN \cite{berrevoets2021disentangled} may appear relevant at first glance; however, it only works with sequences of binary treatments. More importantly, it requires a stronger version of the sequential ignorability assumption: \emph{Sequential Ignorability conditional on current covariates. The current treatment is independent from the potential outcome, conditional on current time-varying covariates: $\mathbf{A}_{t} \ind \mathbf{Y}_{t+1}[\mathbf{a}_t] \,\mid\, \mathbf{X}_{t}, \quad \forall \mathbf{a}_t$.} Therefore, this setting and ours are  different, as in our setting, past time-varying covariates may also serve as confounders.
\end{itemize}

\noindent
We further make an important remark. The problem of estimating counterfactual outcomes over time differs also from reinforcement learning: different from reinforcement learning, we assume a non-Markovian data generation mechanism. This impedes the applicability of such approaches as the size of the state space (history) grows typically with time (see Appendix~\ref{app:assumptions}, Fig.~\ref{fig:causal_diagram}).

\begin{table*}[h!]
    \caption{Overview of methods for estimating counterfactual outcomes over time.}
    \label{tab:methods-comparison}
    \begin{center}
        \scriptsize
                \begin{tabular}{l lllll}
                    \toprule
                    Method & Setting & Model type (backbone) & Time & Treatments & Framework \\
                    \midrule
                    HITR \cite{Xu16} & DGM (\xmark) & NP (GP) & Disc \& Cont & Seq, Cat & G-computation \\
                    CGP \cite{schulam2017reliable} & C, SO, SI, CSI (\xmark) & NP (GP) & Cont & Seq, Cat & G-computation \\
                    MOGP \cite{soleimani2017treatment} & DGM (\xmark) & SP (GP) & Disc \& Cont & Seq, Cont & G-computation \\
                    SyncTwin \cite{qian2021synctwin} & DGM (\xmark) & SP (GRU-D, LSTM) & Disc & Single-time, Bin & Synthetic control \\
                    DCRN \cite{berrevoets2021disentangled} & C, SO, Cov (\xmark) & P (3 LSTMs) & Disc &  Seq, Bin & Disentangled representation \\   
                    \midrule
                    * MSMs \cite{robins2000marginal} & C, SO, SI (\cmark) & P (Logistic \& linear regressions) & Disc & Seq, Cat & IPTW weighted loss \\
                    * RMSNs \cite{lim2018forecasting} & C, SO, SI (\cmark) & P (LSTM) & Disc & Seq, Cat & IPTW weighted loss \\
                    * CRN \cite{bica2020estimating}  & C, SO, SI (\cmark) & P (LSTM) & Disc & Seq, Cat & BR (gradient reversal) \\
                    * G-Net \cite{li2021g} & C, SO, SI (\cmark) & P (LSTM) & Disc & Seq, Cat & G-computation \\
                    \midrule
                    * \longname (this paper)              & C, SO, SI & P (3 transformers) & Disc & Seq, Cat & BR (CDC) \\
                    \bottomrule
                    \multicolumn{6}{l}{$^\ast$ $=$ Methods with the same assumptions as ours (and thus included in our baselines)} \\
                    \multicolumn{6}{p{14cm}}{\emph{Legend:}}\\
                    \multicolumn{6}{p{14cm}}{\quad $\bullet$ Setting: consistency (C), sequential overlap (SO), sequential ignorability (SI), sequential ignorability but conditional on covariates (Cov), continuous sequential ignorability (CSI), assumed data generating model (DGM)}\\
                    \multicolumn{6}{p{14cm}}{\quad $\bullet$ Model: parametric (P), semi-parametric (SP), and non-parametric (NP)}\\
                    \multicolumn{6}{p{14cm}}{\quad $\bullet$ Time: discrete (Disc) or continuous (Cont) time steps}\\
                    \multicolumn{6}{p{14cm}}{\quad $\bullet$ Treatments: sequential (Seq), binary (Bin), categorical (Cat), continuous (Cont).}\\
                    \multicolumn{6}{p{14cm}}{\quad $\bullet$ Framework: inverse probability of treatment weights (IPTW), balanced representations (BR)}
                \end{tabular}
    \end{center}
\end{table*}

\clearpage
\section{Details for Transformer Block} \label{app:CT-block}

Here, we provide the detailed formalization of the multi-input transformer block. Recall that our multi-input transformer block builds on top of three intertwined transformer subnetworks (see Fig.~\ref{fig:multi-input-transformer}). First, we incorporate three separate self-attentions:
\begin{align}
        & \tilde{\mathrm{A}}^{b - 1} = \operatorname{LN}\Big(\operatorname{MHA}\big(Q(\mathrm{A}^{b - 1}), K(\mathrm{A}^{b - 1}), V(\mathrm{A}^{b - 1})\big) + \mathrm{A}^{b - 1}\Big) , \\
        & \tilde{\mathrm{X}}^{b - 1} = \operatorname{LN}\Big(\operatorname{MHA}\big(Q(\mathrm{X}^{b - 1}), K(\mathrm{X}^{b - 1}), V(\mathrm{X}^{b - 1})\big) + \mathrm{X}^{b - 1}\Big) , \\
        & \tilde{\mathrm{Y}}^{b - 1} = \operatorname{LN}\Big(\operatorname{MHA}\big(Q(\mathrm{Y}^{b - 1}), K(\mathrm{Y}^{b - 1}), V(\mathrm{Y}^{b - 1})\big) + \mathrm{Y}^{b - 1}\Big) .
\end{align}
Further, we incorporate cross-attentions:
\begin{align}
        & \tilde{\mathrm{A}}^{b - 1}_{\mathrm{X}} = 
        \operatorname{LN}\Big(\operatorname{MHA}\big(Q(\tilde{\mathrm{A}}^{b - 1}), K(\mathrm{X}^{b - 1}), V(\mathrm{X}^{b - 1})\big) + \tilde{\mathrm{A}}^{b - 1}\Big) , \\ 
        & \tilde{\mathrm{A}}^{b - 1}_{\mathrm{Y}} = 
        \operatorname{LN}\Big(\operatorname{MHA}\big(Q(\tilde{\mathrm{A}}^{b - 1}), K(\mathrm{Y}^{b - 1}), V(\mathrm{Y}^{b - 1})\big) + \tilde{\mathrm{A}}^{b - 1}\Big) , \\[0.3cm]
        & \tilde{\mathrm{X}}^{b - 1}_{\mathrm{A}} = 
        \operatorname{LN}\Big(\operatorname{MHA}\big(Q(\tilde{\mathrm{X}}^{b - 1}), K(\mathrm{A}^{b - 1}), V(\mathrm{A}^{b - 1})\big) + \tilde{\mathrm{X}}^{b - 1}\Big) , \\ 
        & \tilde{\mathrm{X}}^{b - 1}_{\mathrm{Y}} = 
        \operatorname{LN}\Big(\operatorname{MHA}\big(Q(\tilde{\mathrm{X}}^{b - 1}), K(\mathrm{Y}^{b - 1}), V(\mathrm{Y}^{b - 1})\big) + \tilde{\mathrm{X}}^{b - 1}\Big) , \\[0.3cm]
        & \tilde{\mathrm{Y}}^{b - 1}_{\mathrm{X}} = 
        \operatorname{LN}\Big(\operatorname{MHA}\big(Q(\tilde{\mathrm{Y}}^{b - 1}), K(\mathrm{X}^{b - 1}), V(\mathrm{X}^{b - 1})\big) + \tilde{\mathrm{Y}}^{b - 1}\Big) , \\ 
        & \tilde{\mathrm{Y}}^{b - 1}_{\mathrm{A}} = 
        \operatorname{LN}\Big(\operatorname{MHA}\big(Q(\tilde{\mathrm{Y}}^{b - 1}), K(\mathrm{A}^{b - 1}), V(\mathrm{A}^{b - 1})\big) + \tilde{\mathrm{Y}}^{b - 1}\Big) .
\end{align}
Notably, the tensors of treatment representations $\mathrm{A}^{b}$ and $\tilde{\mathrm{A}}^{b}$ are left-shifted with respect to covariates and outcomes representation tensors. Next, we pool the intermediate outputs using linearly transformed static features:
\begin{align}
        & \tilde{\tilde{\mathrm{A}}}^{b - 1} = \tilde{\mathrm{A}}^{b - 1}_{\mathrm{X}} + \tilde{\mathrm{A}}^{b - 1}_{\mathrm{Y}} + \mathbf{1} \tilde{\mathbf{V}}^\top , \\
        & \tilde{\tilde{\mathrm{X}}}^{b - 1} = \tilde{\mathrm{X}}^{b - 1}_{\mathrm{A}} + \tilde{\mathrm{X}}^{b - 1}_{\mathrm{Y}} + \mathbf{1} \tilde{\mathbf{V}}^\top , \\
        & \tilde{\tilde{\mathrm{Y}}}^{b - 1} = \tilde{\mathrm{Y}}^{b - 1}_{\mathrm{X}} + \tilde{\mathrm{Y}}^{b - 1}_{\mathrm{A}} + \mathbf{1} \tilde{\mathbf{V}}^\top .
\end{align}
Finally, the hidden states are processed in parallel by feed-forward layers:
\begin{align}
        & \mathrm{A}^{b} = \operatorname{LN}\big(\operatorname{FF}(\tilde{\tilde{\mathrm{A}}}^{b - 1}) + \tilde{\tilde{\mathrm{A}}}^{b - 1}\big) \\
        & \mathrm{X}^{b} = \operatorname{LN}\big(\operatorname{FF}(\tilde{\tilde{\mathrm{X}}}^{b - 1}) + \tilde{\tilde{\mathrm{X}}}^{b - 1}\big) , \\
        & \mathrm{Y}^{b} = \operatorname{LN}\big(\operatorname{FF}(\tilde{\tilde{\mathrm{Y}}}^{b - 1}) + \tilde{\tilde{\mathrm{Y}}}^{b - 1}\big) .
\end{align}

\clearpage
\section{Absolute Positional Encoding} 
\label{app:abs-pe}

For completeness, we briefly summarize \emph{absolute} positional encoding \cite{vaswani2017attention} in the following. We thereby hope that readers can better understand the differences in the use of \emph{relative} positional encoding as used in our \longname.

In \cite{vaswani2017attention}, absolute positional encoding $\operatorname{PE}(t) \in \mathbb{R}^{d_h}$ was introduced to uniquely encode each time step $t \in \{1, \dots, T\}$. Absolute positional encodings were added to the initial hidden states right before the first transformer block via
\begin{equation}
    \hat{\mathbf{h}}_t^0 = \mathbf{h}_t^0 + \operatorname{PE}(t) .
\end{equation}
In addition, the authors used fixed (non-trainable) weights, produced by sine and cosine functions with differing frequencies; \ie, 
\begin{align}
    \label{eq:pe-sin}
    \big(\operatorname{PE}(t)\big)_{2j} & = \sin{\frac{t}{10000^{2j/d_h}}} , \\
    \label{eq:pe-cos}
    \big(\operatorname{PE}(t)\big)_{2j + 1} & = \cos{\frac{t}{10000^{2j/d_h}}} .
\end{align}
This encoding scheme ensures continuity between neighboring time steps and that time-delta shifts are equivalent to linear transformations. Alternatively, one could use trainable absolute positional encodings, which would require learning $T \times d_h$ parameters, where $T$ is a maximum sequence length from the training subset. For our \shortname, this limits the ability to generalize to unseen sequence lengths. Hence, this is the main reason why we opted for clipped relative positional encodings for our \shortname instead.

Notably, we used the same fixed encoding scheme from Eq.~\eqref{eq:pe-sin}--\eqref{eq:pe-cos} to produce non-trainable relative PE. 

\clearpage
\section{Details on Adversarial Training} \label{app:adv-training}

In our \shortname, we implement the adversarial training for Eq.~\eqref{eq:loss-yr} and Eq.~\eqref{eq:loss-a} by performing iterative gradient descent updates (rather than optimizing globally). Algorithm~\ref{alg:grad-desc-ema} provides the pseudocode. Recall that we further make use of exponential moving average (EMA) for stabilization. 

\begin{algorithm}[htb]
  \caption{ Adversarial training in \shortname via iterative gradient descent}
  \label{alg:grad-desc-ema}
    \begin{algorithmic}
      \STATE {\bfseries Input:} number of iterations $n_{\text{iter}}$, smoothing parameter $\beta$, CDC coefficient $\alpha$, learning rate $\eta$
      \STATE Initialize $\theta_Y^{(0)}, \theta_A^{(0)}, \theta_R^{(0)}$
      \FOR{$i=1$ {\bfseries to} $n_{\text{iter}}$}
      \STATE Update gradient descent $\theta_Y^{(i)} \gets \theta_Y^{(i-1)} - \eta \nabla_{\theta_Y} \big[ \mathcal{L}_{G_Y} (\theta_Y^{(i-1)}, \theta_R^{(i-1)}) \big]$
      \STATE Update gradient descent $\theta_R^{(i)} \gets \theta_R^{(i-1)} - \eta \nabla_{\theta_R} \big[ \mathcal{L}_{G_Y} (\theta_Y^{(i-1)}, \theta_R^{(i-1)}) + \alpha \mathcal{L}_{\text{conf}} (\theta_{A, \text{EMA}}^{(i-1)}, \theta_R^{(i-1)}) \big]$
      \STATE Update EMA $\theta_{Y, \text{EMA}}^{(i)} \gets \beta  \theta^{(i - 1)}_{Y, \text{EMA}} + (1 - \beta) \theta_{Y}^{(i)}$
      \STATE Update EMA $\theta_{R, \text{EMA}}^{(i)} \gets \beta  \theta^{(i - 1)}_{R, \text{EMA}} + (1 - \beta) \theta_R^{(i)}$
      \STATE Update gradient descent $\theta_A^{(i)} \gets \theta_A^{(i-1)} - \eta \nabla_{\theta_A} \big[ \mathcal{L}_{G_A} (\theta_A^{(i-1)}, \theta_{R, \text{EMA}}^{(i)}) \big]$
      \STATE Update EMA $\theta_{A, \text{EMA}}^{(i)} \gets \beta  \theta^{(i - 1)}_{A, \text{EMA}} + (1 - \beta) \theta_A^{(i)}$
      \ENDFOR
    \end{algorithmic}
\end{algorithm}

\clearpage
\section{Proof of Theorem~\ref{thrm:domain_conf_loss}}  \label{app:proof}

We begin by stating a lemma similar to \emph{Lemma 1} \cite{bica2020estimating}, yet ours includes the treatment probabilities $\mathbb{P}(\mathbf{A}_t = a_i)$ of Eq.~\eqref{eq:ga-star}.

\begin{lemma}\label{lem:sol_G}
Let $\alpha_i = \mathbb{P}(\mathbf{A}_t = a_i)$ and $x^\prime = \Phi(\bar{\mathbf{H}}_t)$ for some fixed representation network $\Phi(\cdot)$. Then, it holds that
\begin{equation}\label{eq:sol_G}
    {G^\ast}^j_A(x^\prime) =  \frac{\alpha_j P^\Phi_j(x^\prime)}{\sum_{i=1}^{d_a}\alpha_i P^\Phi_i(x^\prime)}.
\end{equation}
\end{lemma}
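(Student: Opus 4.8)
The statement to prove is the closed form for the optimal treatment classifier $G^\ast_A$ in Eq.~\eqref{eq:sol_G}. The plan is to treat Eq.~\eqref{eq:ga-star} as a pointwise-in-$x'$ constrained maximization problem. Fix the representation network $\Phi$, and write $x' = \Phi(\bar{\mathbf{H}}_t)$. First I would rewrite the objective in Eq.~\eqref{eq:ga-star} as an integral over the representation space. Since $P_j$ is the distribution of $\bar{\mathbf{H}}_t$ given $\mathbf{A}_t = a_j$, pushing it forward through $\Phi$ gives $P^\Phi_j$, so that
\begin{equation*}
\sum_{j=1}^{d_a} \mathbb{E}_{\bar{\mathbf{H}}_t \sim P_j}\!\left[\log G^j_A(\Phi(\bar{\mathbf{H}}_t))\right]\alpha_j
= \sum_{j=1}^{d_a} \alpha_j \int \log G^j_A(x')\, P^\Phi_j(x')\, \dd x'.
\end{equation*}
Exchanging sum and integral, the integrand at each point $x'$ is $\sum_{j=1}^{d_a} \alpha_j P^\Phi_j(x') \log G^j_A(x')$, and the constraint $\sum_{i=1}^{d_a} G^i_A(x') = 1$ is also pointwise. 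Hence the global maximizer is obtained by maximizing the integrand separately for each fixed $x'$.

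\textbf{Pointwise optimization.} For fixed $x'$, set $c_j \define \alpha_j P^\Phi_j(x') \ge 0$ and optimize $f(g_1,\dots,g_{d_a}) = \sum_{j=1}^{d_a} c_j \log g_j$ over the simplex $\{g_j \ge 0,\ \sum_i g_i = 1\}$. This is a strictly concave maximization (on the interior) of a linear combination of logarithms subject to a linear equality constraint, so a Lagrange-multiplier argument applies cleanly. Introducing a multiplier $\nu$ for the constraint and setting $\partial/\partial g_j\big[\sum_j c_j \log g_j - \nu(\sum_i g_i - 1)\big] = 0$ gives $c_j / g_j = \nu$, hence $g_j = c_j/\nu$. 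Summing and using $\sum_i g_i = 1$ fixes $\nu = \sum_i c_i$, yielding
\begin{equation*}
g_j = \frac{c_j}{\sum_{i=1}^{d_a} c_i} = \frac{\alpha_j P^\Phi_j(x')}{\sum_{i=1}^{d_a} \alpha_i P^\Phi_i(x')},
\end{equation*}
which is exactly Eq.~\eqref{eq:sol_G} with $x' = \Phi^\ast(\bar{\mathbf{H}}_t)$. I would note that this mirrors the standard GAN-discriminator computation and the analogous step in \emph{Lemma 1} of \cite{bica2020estimating}, the only difference being the treatment-prior weights $\alpha_i$, which carry through the Lagrange computation unchanged.

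\textbf{Main obstacle.} The routine algebra is harmless; the technical care goes into justifying that the pointwise optimum is indeed the global optimum of the integral functional, and into handling the boundary/degenerate cases. I would argue the pointwise-to-global reduction by noting that a collection of functions $\{G^j_A\}$ attaining the pointwise maximum at almost every $x'$ necessarily maximizes the integral, since no competing choice can exceed it on a positive-measure set. For degeneracies I would observe that where the denominator $\sum_i \alpha_i P^\Phi_i(x')$ vanishes the point lies outside the support of the representation and contributes nothing to the integral, so $G^\ast_A$ may be defined arbitrarily there; and where some $c_j = 0$ the corresponding term drops out and $g_j = 0$ is forced, consistent with the formula. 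These measure-theoretic and boundary subtleties, rather than the Lagrange calculation itself, are the part that needs the most attention to state rigorously.
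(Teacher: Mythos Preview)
Your proposal is correct and follows essentially the same approach as the paper: reduce the objective in Eq.~\eqref{eq:ga-star} to a pointwise-in-$x'$ constrained maximization and solve it with Lagrange multipliers, exactly as the paper sketches (citing \cite{bica2020estimating}). Your write-up is in fact more detailed than the paper's, since you carry out the Lagrange computation explicitly and discuss the boundary and measure-zero cases the paper leaves implicit.
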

\begin{proof}
The objective in Eq.~\eqref{eq:obj_G} is obtained for fixed $\Phi$ by maximizing the following objective pointwise for any $x^\prime$:
\begin{equation}
     G^\ast_A = \argmax_{G_A} \sum_{j=1}^{d_a} \alpha_j \log\left({G}^j_A(x^\prime) \right) P^\Phi_j(x^\prime) \quad \text{subject to} \quad  \sum_{i=1}^{d_a} {G}^i_A(x^\prime) = 1.
\end{equation}
The result can now be obtained by applying Lagrange multipliers as done in \cite{bica2020estimating}.
\end{proof}

We now derive our theorem.

\begin{theorem}\label{thrm:domain_conf_loss}
We fix $t \in \mathbb{N}$ and define $P$ as the distribution of $\bar{\mathbf{H}}_t$, $P_j$ as the distribution of $\bar{\mathbf{H}}_t$ given $\mathbf{A}_t = a_j$, and $P^\Phi_j$ as the distribution of $\Phi(\bar{\mathbf{H}}_t)$ given $\mathbf{A}_t = a_j$ for all $j \in \{1, \dots, d_a\}$. Let $G^j_A$ denote the output of $G_A$ corresponding to treatment $a_j$. Then, there exists an optimal pair $(\Phi^\ast, G^\ast_A)$ such that
\begin{align}\
 \Phi^\ast &= \argmax_{\Phi} \sum_{j=1}^{d_a} \mathbb{E}_{\bar{\mathbf{H}}_t \sim P}\left[ \log{G^\ast}^j_A(\Phi(\bar{\mathbf{H}}_t) \right]\label{eq:obj_Phi} \\
 G^\ast_A &= \argmax_{G_A} \sum_{j=1}^{d_a} \mathbb{E}_{\bar{\mathbf{H}}_t \sim P_j}\left[\log{G}^j_A(\Phi^\ast(\bar{\mathbf{H}}_t) \right] \mathbb{P}(\mathbf{A}_t = a_j) \label{eq:obj_G}\\
& \quad \quad \text{subject to }  \sum_{i=1}^{d_a} {G}^i_A(\Phi^\ast(\bar{\mathbf{H}}_t)) = 1.
\end{align}
Furthermore, $\Phi^\ast$ satisfies Eq.~\eqref{eq:obj_Phi} if and only if it induces balanced representations across treatments, i.e., $P^{\Phi^\ast}_1 = \ldots = P^{\Phi^\ast}_{d_a}$.
\end{theorem}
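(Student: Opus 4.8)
The plan is to treat this as a two-player optimization in the spirit of GAN-type minimax analyses. First I would observe that the inner problem over $G_A$ is already solved by Lemma~\ref{lem:sol_G}: for any fixed $\Phi$, the optimal classifier has the closed form ${G^\ast}^j_A(x') = \alpha_j P^\Phi_j(x') / \sum_i \alpha_i P^\Phi_i(x')$, which is just the Bayes-optimal posterior $\mathbb{P}(\mathbf{A}_t = a_j \mid \Phi(\bar{\mathbf{H}}_t) = x')$. The remaining work is the outer problem over $\Phi$: I would substitute this optimal $G^\ast_A$ into the objective of Eq.~\eqref{eq:obj_Phi} and show that maximizing over $\Phi$ forces the balancing condition $P^{\Phi}_1 = \ldots = P^{\Phi}_{d_a}$.

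The key computation: plug ${G^\ast}^j_A$ into $\sum_{j} \mathbb{E}_{\bar{\mathbf{H}}_t \sim P}[\log {G^\ast}^j_A(\Phi(\bar{\mathbf{H}}_t))]$. Since $P = \sum_i \alpha_i P_i$ at the level of $\bar{\mathbf{H}}_t$, and pushing forward through $\Phi$ gives the mixture $P^\Phi \define \sum_i \alpha_i P^\Phi_i$ on representation space, the expectation rewrites as an integral over representation space of $P^\Phi(x') \sum_j \log\big( \alpha_j P^\Phi_j(x') / P^\Phi(x')\big)$. I expect this to decompose, after adding and subtracting $\log \alpha_j$ and $\log(1/d_a)$ terms, into a sum of negative KL divergences plus constants — concretely something like $-\sum_j \mathrm{KL}\!\left(P^\Phi \,\|\, P^\Phi_j\right)$ up to terms not depending on $\Phi$ (the precise constants involve $\sum_j \log \alpha_j$ and entropy-type quantities, which I would not grind through here). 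Because each KL term is nonnegative and vanishes exactly when $P^\Phi = P^\Phi_j$, the supremum over $\Phi$ is attained precisely when $P^\Phi_j = P^\Phi$ for every $j$, i.e., when all the treatment-conditional representation laws coincide. That gives the ``if and only if'' in both directions: balanced representations achieve the maximum, and any maximizer must be balanced. Existence of the optimal pair follows because the balancing condition is achievable in principle (e.g., a constant map $\Phi$) and then $G^\ast_A$ is determined by Lemma~\ref{lem:sol_G}.

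One subtlety I would be careful about: when $P^\Phi_1 = \ldots = P^\Phi_{d_a}$, the optimal classifier collapses to the constant ${G^\ast}^j_A \equiv \alpha_j$, so I should check the objective value in that case is consistent with the KL-decomposition (it should evaluate to $\sum_j \log \alpha_j$, the claimed maximum), and also that Lemma~\ref{lem:sol_G} still applies — it does, since the lemma holds for any fixed $\Phi$. I would also state the standing assumption that the relevant densities exist (dominated by a common reference measure) so that KL divergence and the pointwise manipulations are well-defined; the paper works with distributions $P^\Phi_j$ and this regularity is implicit.

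\textbf{Main obstacle.} The crux is the algebraic identity turning $\sum_j \int P^\Phi(x')\log\!\big(\alpha_j P^\Phi_j(x')/P^\Phi(x')\big)\,dx'$ into $\big(\sum_j \log\alpha_j\big) - \sum_j \mathrm{KL}(P^\Phi \| P^\Phi_j)$ and then correctly identifying that this is maximized exactly at the balanced configuration — in particular getting the direction of the KL right (it is the \emph{reversed} KL, mixture-to-component, matching the footnote's claim) and handling the $\log \alpha_j$ bookkeeping cleanly. Everything else (invoking Lemma~\ref{lem:sol_G}, nonnegativity of KL, the existence argument) is routine.
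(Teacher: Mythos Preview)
Your proposal is correct and follows essentially the same route as the paper: invoke Lemma~\ref{lem:sol_G} for the inner optimum, substitute into Eq.~\eqref{eq:obj_Phi}, and reduce to $\big(\sum_j \log\alpha_j\big) - \sum_j \mathrm{KL}(P^\Phi \,\|\, P^\Phi_j)$, from which the ``if and only if'' follows by nonnegativity of KL. Your bookkeeping is slightly more cautious than needed (no $\log(1/d_a)$ terms actually appear; only $\sum_j \log\alpha_j$ survives as the constant), and your ``only if'' direction via ``each KL term must vanish'' is in fact a touch cleaner than the paper's contradiction argument, but the substance is identical.
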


\begin{proof}[Proof of Theorem \ref{thrm:domain_conf_loss}]
We plug the optimal prediction probabilities provided by Lemma~\ref{lem:sol_G} into the objective Eq.~\eqref{eq:obj_Phi} and obtain
\begin{align}
    \sum_{j=1}^{d_a} \mathbb{E}_{x^\prime \sim P^\Phi}\left[ \log\frac{\alpha_j P^\Phi_j(x^\prime)}{\sum_{i=1}^{d_a}\alpha_i P^\Phi_i(x^\prime)} \right] &=  \sum_{j=1}^{d_a} \int \log\left(\frac{ P^\Phi_j(x^\prime)}{\sum_{i=1}^{d_a}\alpha_i P^\Phi_i(x^\prime)} \right)  P^\Phi(x^\prime) \, \dd x^\prime + \underbrace{\sum_{j=1}^{d_a} \log(\alpha_j)}_{=C} \\
    &= \sum_{j=1}^{d_a} \int \log\left(\frac{ P^\Phi_j(x^\prime)}{\sum_{i=1}^{d_a}\alpha_i P^\Phi_i(x^\prime)} \right)  \sum_{i=1}^{d_a}\alpha_i P^\Phi_i(x^\prime) \, \dd x^\prime + C \\
    &= - \sum_{j=1}^{d_a} \mathrm{KL}\left(\sum_{i=1}^{d_a}\alpha_i P^\Phi_i(x^\prime) \; \Bigg\vert\Bigg\vert\; P^\Phi_j(x^\prime)\right) + C.
\end{align}

Hence, the objective becomes
\begin{equation}
 \min_{\Phi}   \sum_{j=1}^{d_a} \mathrm{KL}\left(\sum_{i=1}^{d_a}\alpha_i P^\Phi_i(x^\prime) \; \Bigg\vert\Bigg\vert\; P^\Phi_j(x^\prime)\right).
\end{equation}
For balanced representations $P^\Phi_1 = \dots = P^\Phi_{d_a}$, we obtain a global minimum because
\begin{equation}\label{eq:opt_Phi_KL}
    \mathrm{KL}\left(\sum_{i=1}^{d_a}\alpha_i P^\Phi_i(x^\prime) \; \Bigg\vert\Bigg\vert\;  P^\Phi_j(x^\prime)\right) = \mathrm{KL}\left(P^\Phi_1(x^\prime) \;\Big\vert\Big\vert\,  P^\Phi_1(x^\prime)\right) = 0
\end{equation}
for all $j \in \{1, \dots, d_a\}$.

Let us now assume that there exists an optimal $\Phi$ that satisfies Eq.~\eqref{eq:opt_Phi_KL} and that induces unbalanced representations, i.e., there exists an $j \neq \ell$ with $P^\Phi_j \neq P^\Phi_\ell$. This implies 
\begin{equation}
    \sum_{i=1}^{d_a}\alpha_i P^\Phi_i(x^\prime) = P^\Phi_j \neq P^\Phi_\ell = \sum_{i=1}^{d_a}\alpha_i P^\Phi_i(x^\prime) ,
\end{equation}
which is a contradiction. Hence, $\Phi$ attains the global optimum if and only if it induces balanced representations.
\end{proof}

\clearpage
\section{Baseline Methods}  \label{app:baselines}

We select four methods as baselines, which make use of the same setting as our work (see Sec.~\ref{sec:related-work}). These are: (1)~marginal structural models~(\textbf{MSMs}) \cite{robins2000marginal, hernan2001marginal}, (2)~recurrent marginal structural networks~(\textbf{RMSNs}) \cite{lim2018forecasting}, (3)~counterfactual recurrent network~(\textbf{CRN}) \cite{bica2020estimating}, and (4) \textbf{G-Net} \cite{li2021g}. We provide details for each in the following.

\subsection{Marginal Structural Models (MSMs)}

Marginal structural models (MSMs) \cite{robins2000marginal, hernan2001marginal} are a standard baseline from epidemiology, which aim at counterfactual outcomes estimation with inverse probability of treatment weights (IPTW) via linear modeling. Time-varying confounding bias is removed with the help of stabilized weights
\begin{equation}
    \label{eq:sw}
    SW(t, \tau) = \frac{\prod_{n=t}^{t+\tau} f\left(\mathbf{A}_n  \;\mid\;  \bar{\mathbf{A}}_{n-1}\right)}{\prod_{n=t}^{t+\tau} f\left(\mathbf{A}_n  \;\mid\;  \bar{\mathbf{H}}_{n} \right)} ,
\end{equation}
where $\tau$ ranges from $1$ to $\tau_{\text{max}}$, and $f\left(\mathbf{A}_n  \;\mid\;  \bar{\mathbf{A}}_{n-1}\right)$ and $f\left(\mathbf{A}_n  \;\mid\;  \bar{\mathbf{H}}_{n} \right)$ denote the conditional probabilities mass functions for discrete treatments of $\mathbf{A}_n$ given $\bar{\mathbf{A}}_{n-1}$ and $\bar{\mathbf{H}_{n}}$, respectively. Both are estimated with logistic regressions, which depend on the sum of previous treatment applications, two previous time-varying covariates and static covariates
\begin{align}
    f\left(\mathbf{A}_t  \;\mid\;  \bar{\mathbf{A}}_{t-1}\right) & = \sigma \left( \sum_{j=1}^{d_A} \omega_j \sum_{n=1}^{t-1} \mathbbm{1}_{[\mathbf{A}_n = a_j]} \right), \\
    \label{eq:msm-history}
    f\left(\mathbf{A}_t  \;\mid\;  \bar{\mathbf{H}}_{t} \right) & = \sigma \left(W_{1,x} \textbf{X}_t + W_{2,x} \textbf{X}_{t-1} + W_{1,y} \textbf{Y}_t + W_{2,y} \textbf{X}_{t-1} + W_{v} \textbf{V} +   \sum_{j=1}^{d_A} \phi_j \sum_{n=1}^{t-1} \mathbbm{1}_{[\mathbf{A}_n = a_j]} \right),
\end{align}
where $\sigma(\cdot)$ is a sigmoid function and where $\omega_{\cdot}, \phi_{\cdot}, W_{\cdot}$ are logistic regression parameters. After the stabilized weights are estimated, they are normalized and truncated at their 1-st and 99-th percentiles as done in \cite{lim2018forecasting}.

Counterfactual outcome regressions are fit for each prediction horizon $\tau$ separately. For a specific $\tau$, we split dataset into smaller chunks with a rolling origin and calculate stabilized weights for each chunk. Outcome regressions use the same history inputs, as $f\left(\mathbf{A}_n  \;\mid\;  \bar{\mathbf{H}}_{n} \right)$ (Eq.~\eqref{eq:msm-history}). 

MSMs do not contain hyperparameters; thus, we have merged train and validation subsets for all the experiments.

\subsection{Recurrent Marginal Structural Networks (RMSNs)}

RMSNs refer to sequence-to-sequence architectures consisting of four LSTM subnetworks: propensity treatment network, propensity history network, encoder, and decoder. RMSNs are designed to handle multiple binary treatments. The encoder first learns a representation of the observed history $\bar{\mathbf{H}}_t$ to perform one-step-ahead prediction. The decoder then uses this representation for estimating $\tau$-step-ahead counterfactual outcomes. A fully-connected linear layer (memory adapter) is used to match the size of the representation  of the encoder and the hidden units of the decoder. 

In RMSNs, time-varying confounding is addressed by re-weighting the objective with the IPTW \cite{robins2000marginal} during training. IPTW creates a pseudo-population that mimics a randomized controlled trial. As done in \cite{lim2018forecasting}, we use the stabilized weights (Eq.~\eqref{eq:sw}). Both $f\left(\mathbf{A}_n  \;\mid\;  \bar{\mathbf{A}}_{n-1}\right)$ and $f\left(\mathbf{A}_n  \;\mid\;  \bar{\mathbf{H}}_{n} \right)$ are learned from the data using LSTM networks, which are called propensity treatment network (nominator) and propensity history network (denominator).

During training, the propensity networks are trained first to estimate the stabilized weights $SW(t, \tau)$. Afterward, the encoder is trained using a mean squared error~(MSE) weighted with $SW( \cdot, 1 )$. Similarly to MSMs, stabilized weights are normalized and truncated.

Finally, the decoder is trained by minimizing the loss using the full stabilized weights $SW( \cdot, \tau_{\text{max}})$. For this purpose, the dataset is processed into smaller chunks with rolling origins, and, for each rolling origin, a representation is built using the trained encoder. We refer to \cite{lim2018forecasting} for details on the training algorithm.

We tuned the same hyperparameters, as in the original paper \cite{lim2018forecasting} (see details in Appendix~\ref{app:hparams}).

\subsection{Counterfactual Recurrent Network (CRN)}

CRN consists of an encoder-decoder architecture. In contrast to RMSNs, which use IPTW to address time-varying confounding, CRN builds balanced representations which are non-predictive of the treatment assignment. This is achieved by adopting an adversarial learning technique, namely gradient reversal \cite{ganin2015unsupervised}.

In CRN, both encoder and decoder consist of a single LSTM-layer. Unlike RMSNs, the authors and we did not use a memory adapter. Thus, the number of LSTM hidden units $d_h$ of decoder is set to the size of the balanced representation of the encoder.

At each time step $t$, the hidden states $\mathbf{h}_t$ are fed into a fully-connected linear layer that builds a representation $\mathbf{\Phi}_t$. Then, two fully-connected networks $G_Y$ and $G_A$, put on top of $\mathbf{\Phi}_t$, aim to predict the next outcome $\mathbf{Y}_{t+1}$ and the current treatment $\mathbf{A}_t$, respectively. For this, both encoder and decoder are trained by minimizing the loss
\begin{equation}
    \mathcal{L} = \left\Vert \mathbf{Y}_{t+1} - G_Y\big(\mathbf{\Phi}_t, \mathbf{A}_t \big) \right\Vert^2 - \lambda \, \sum_{j=1}^{d_a} \mathbbm{1}_{[\mathbf{A}_t = a_j]} \log G_A (\mathbf{\Phi}_t)
\end{equation}
with hyperparameter $\lambda$. The loss $\mathcal{L}$ is based on a gradient reversal layer \cite{ganin2015unsupervised}, which forces $G_A$ to minimize cross-entropy between predicted and current treatment, but $\mathbf{\Phi}_t$ to maximize it. In our experiments, we kept $\lambda = 1$, as it was used by \cite{atan2018learning,bica2020estimating}.

In our ablation study (Sec.~\ref{sec:ablation}), we combined CRN with our CDC loss. For that, we applied our adversarial training procedure (introduced in Sec.~\ref{sub-sec:CT-training}) to representations of LSTM-based encoder and decoder, and feed-forward networks $G_Y$ and $G_A$. Here, EMA of model parameters ($\beta = 0.99$) was also accompanying the CDC loss.

\subsection{G-Net}

G-Net is based the G-computation formula from Eq.~\eqref{eq:gformula}, which expresses the average counterfactual outcome $\mathbf{Y}_{t + \tau}[\bar{\mathbf{a}}_{t, t+\tau-1}]$ conditioned on the history $\bar{\mathbf{H}}_t$ in terms of the observational data distribution.

G-Net performs counterfactual outcomes prediction in two steps: First, the conditional distributions $\mathbb{P}(\mathbf{X}_{j} \mid \bar{\mathbf{H}}_t, \bar{\mathbf{x}}_{t+1:j-1}, \bar{\mathbf{a}}_{t:j-1})$ are estimated. Then, Monte Carlo simulations are performed via Eq.~\eqref{eq:gformula}, by sampling from the estimated distributions. Afterward, $\mathbf{Y}_{t + \tau}[\bar{\mathbf{a}}_{t, t+\tau-1}]$ is predicted by taking the empirical mean over the Monte Carlo samples ($M = 50$ in our experiments). 

The conditional distributions $\mathbb{P}(\mathbf{X}_{j} \mid \bar{\mathbf{H}}_t, \bar{\mathbf{x}}_{t+1:j-1}, \bar{\mathbf{a}}_{t:j-1})$ are learned by estimating the respective conditional expectations $\mathbb{E}(\mathbf{X}_{j} \mid \bar{\mathbf{H}}_t, \bar{\mathbf{x}}_{t+1:j-1}, \bar{\mathbf{a}}_{t:j-1})$, which are learned via a single LSTM jointly with outcome prediction. One can then sample from $\mathbb{P}(\mathbf{X}_{t+j} \mid \bar{\mathbf{H}}_t, \bar{\mathbf{x}}_{t+1:j-1}, \bar{\mathbf{a}}_{t:j-1})$ by drawing from the empirical distributions of the residuals on some holdout set not used to estimate the conditional expectations. We used 10\% of the training data for the holdout dataset.

For better comparability with other baselines, we used one or two-layered LSTMs (as in the original papers) with an extra fully-connected linear representation layer and a network with hidden units on top of the latter (analogous to $G_Y$ in \shortname or CRN).

\clearpage
\section{Hyperparameter Tuning} \label{app:hparams}

We performed hyperparameter optimization for all benchmarks via random grid search with respect to the factual RMSE of the validation set. We list the ranges of hyperparameter grids in Table~\ref{tab:data-specific-hyperparams}. We report additional information on model-specific hyperparameters in Table~\ref{tab:common-hyperparams} (here we used the same ranges for all experiments). For reproducibility, we make the selected hyperparameters public: they can be found in YAML format in our GitHub\footnote{ \url{https://github.com/Valentyn1997/CausalTransformer/}}. 

We aimed for a fair comparison and thus kept the number of parameters and layers similar across datasets and models. Nevertheless, the hyperparameter ranges differ slightly for each dataset and model, as the size of inputs is different (see Table \ref{tab:common-hyperparams}). Thus, e.g., the range of sizes of hidden units (sequential, representational, or fully-connected) is decreased for the MIMIC-III-based experiments. In specific cases (LSTM hidden units propensity treatment network of RMSNs or transformer units of \shortname), we discarded unrealistically small values for synthetic datasets. For the fully-synthetic dataset based on the tumor growth simulator, we use one layer sequential models. For MIMIC-III, we also include two-layered LSTMs/transformers. The number of epochs ($n_e$) is also chosen differently for each dataset to reflect its complexity. \shortname generally requires more epochs to converge due to the EMA of model weights. Therefore, we used approximately 60\,\% more epochs for \shortname than other models. Note that \shortname still outperforms the baselines when EMA is omitted, as shown in our ablation study. Due to the high memory usage of self-attention for long sequences and batch augmentation with masked vitals of \shortname, we also use smaller ranges of minibatch sizes for \shortname.  Notably, as in \shortname, we omitted the final projection layer after concatenation of the attention heads, as we need the size of hidden units (which always depends on the input size while tuning) to always be divisible by the number of heads $n_h$. Thus, we have chosen the closest larger divisible by the number of hidden units. 

\textbf{Training of baselines:} All baseline models are implemented in PyTorch Lightning and, as our \shortname, trained with Adam \cite{kingma2014adam}. The number of epochs ($n_e$) is varied across datasets for a better fit. 

We perform exponential rise of both $\alpha$ (in the CDC loss) and $\lambda$ (in gradient reversal). This is given by
\begin{equation}
    \alpha_e = \alpha \cdot \Big(\frac{2}{1 + \exp (-10 * e / n_e)} - 1 \Big), \quad \quad \lambda_e = \lambda \cdot \Big(\frac{2}{1 + \exp (-10 * e / n_e)} - 1 \Big) ,
\end{equation}
where $e \in 1, \ldots, n_e$ is an index of current epoch.

For all baselines, we  also used the teacher forcing technique \cite{williams1989learning} when training the models for multiple-step-ahead prediction. During evaluation of multiple-step-ahead prediction, we switch off teacher forcing and autoregressively feed model predictions. 

\begin{table*}[t]
    \caption{Ranges for hyperparameter tuning across experiments. Here, we distinguish (1)~data using the tumor growth (TG) simulator ($=$experiments with fully-synthetic data), (2)~data from the semi-synthetic (SS) benchmark, and (3)~real-world (RW) MIMIC-III data. C is the input size. $d_r$ is the size of balanced representation (BR) or the output of LSTM (in the case of G-Net).}
    \label{tab:data-specific-hyperparams}
    \begin{center}
        \footnotesize
                \begin{tabular}{l|l|l|c|c|c}
                    \toprule
                    Model & Sub-model & Hyperparameter & Range (TG simulator) & Range (SS data) & Range (RW data) \\
                    \midrule
                    \multirow{21}{*}{RMSNs} 
                    & \multirow{7}{*}{\begin{tabular}{l} Propensity \\ treatment \\ network \end{tabular}} 
                       & LSTM layers ($B$) & 1 & 1, 2 & 1, 2 \\
                       && Learning rate ($\eta$) & \multicolumn{3}{c}{0.01, 0.001, 0.0001} \\
                       && Minibatch size & \multicolumn{3}{c}{64, 128, 256} \\
                       && LSTM hidden units ($d_h$) & 0.5C, 1C, 2C, 3C, 4C & 0.5C, 1C, 2C & 0.5C, 1C, 2C\\
                       && LSTM dropout rate ($p$) &  \multicolumn{3}{c}{0.1, 0.2, 0.3, 0.4, 0.5} \\
                       && Max gradient norm &  \multicolumn{3}{c}{0.5, 1.0, 2.0} \\
                       && Number of epochs ($n_e$) &  100 & 400 & 200 \\
                    \cmidrule{2-6}
                    & \multirow{7}{*}{\begin{tabular}{l} Propensity \\ history \\ network \\ \midrule  Encoder \end{tabular}} 
                       & LSTM layers ($B$) & 1 & 1, 2 & 1, 2 \\
                       && Learning rate ($\eta$) & \multicolumn{3}{c}{0.01, 0.001, 0.0001} \\
                       && Minibatch size & \multicolumn{3}{c}{64, 128, 256} \\
                       && LSTM hidden units ($d_h$) & 0.5C, 1C, 2C, 3C, 4C  & 0.5C, 1C, 2C & 0.5C, 1C, 2C  \\
                       && LSTM dropout rate ($p$) &  \multicolumn{3}{c}{0.1, 0.2, 0.3, 0.4, 0.5} \\
                       && Max gradient norm &  \multicolumn{3}{c}{0.5, 1.0, 2.0} \\
                       && Number of epochs ($n_e$) &  100 & 400 & 200 \\
                    \cmidrule{2-6}
                    & \multirow{7}{*}{\begin{tabular}{l} Decoder \end{tabular}} 
                       & LSTM layers ($B$) & 1 & 1, 2 & 1, 2 \\
                       && Learning rate ($\eta$) & \multicolumn{3}{c}{0.01, 0.001, 0.0001} \\
                       && Minibatch size & \multicolumn{3}{c}{256, 512, 1024} \\
                       && LSTM hidden units ($d_h$) & 1C, 2C, 4C, 8C, 16C & 1C, 2C, 4C & 1C, 2C, 4C\\
                       && LSTM dropout rate ($p$) &  \multicolumn{3}{c}{0.1, 0.2, 0.3, 0.4, 0.5} \\
                       && Max gradient norm &  \multicolumn{3}{c}{0.5, 1.0, 2.0, 4.0} \\
                       && Number of epochs ($n_e$) & 100 & 400 & 200 \\
                    \midrule
                    \multirow{16}{*}{CRN} & \multirow{8}{*}{\begin{tabular}{l} Encoder \end{tabular}} 
                       & LSTM layers ($B$) & 1 & 1, 2 & 1, 2\\
                       && Learning rate ($\eta$) & \multicolumn{3}{c}{0.01, 0.001, 0.0001}\\
                       && Minibatch size & \multicolumn{3}{c}{64, 128, 256} \\
                       && LSTM hidden units ($d_h$) & 0.5C, 1C, 2C, 3C, 4C & 0.5C, 1C, 2C & 0.5C, 1C, 2C\\
                       && BR size ($d_r$) & 0.5C, 1C, 2C, 3C, 4C &  0.5C, 1C, 2C &  0.5C, 1C, 2C \\
                       && FC hidden units ($n_{\text{FC}}$) & 0.5$d_r$, 1$d_r$, 2$d_r$, 3$d_r$, 4$d_r$ & 0.5$d_r$, 1$d_r$, 2$d_r$ & 0.5$d_r$, 1$d_r$, 2$d_r$ \\
                       && LSTM dropout rate ($p$) &  \multicolumn{3}{c}{0.1, 0.2, 0.3, 0.4, 0.5} \\
                       && Number of epochs ($n_e$) &  100 & 400 & 200 \\
                    \cmidrule{2-6}
                    & \multirow{8}{*}{\begin{tabular}{l} Decoder \end{tabular}} 
                       & LSTM layers ($B$) & 1 & 1, 2 & 1, 2 \\
                       && Learning rate ($\eta$) & \multicolumn{3}{c}{0.01, 0.001, 0.0001} \\
                       && Minibatch size & \multicolumn{3}{c}{256, 512, 1024} \\
                       && LSTM hidden units ($d_h$) & \multicolumn{3}{c}{BR size of encoder} \\
                       && BR size ($d_r$) & 0.5C, 1C, 2C, 3C, 4C & 0.5C, 1C, 2C & 0.5C, 1C, 2C \\
                       && FC hidden units ($n_{\text{FC}}$) & 0.5$d_r$, 1$d_r$, 2$d_r$, 3$d_r$, 4$d_r$ & 0.5$d_r$, 1$d_r$, 2$d_r$ & 0.5$d_r$, 1$d_r$, 2$d_r$ \\
                       && LSTM dropout rate ($p$) &  \multicolumn{3}{c}{0.1, 0.2, 0.3, 0.4, 0.5} \\
                       && Number of epochs ($n_e$) &  100 & 400 & 200 \\
                    \midrule
                    \multirow{8}{*}{G-Net} & \multirow{8}{*}{\begin{tabular}{l} --- \end{tabular}}
                        & LSTM layers ($B$) & 1 & 1, 2 & 1, 2 \\
                        && Learning rate ($\eta$) & \multicolumn{3}{c}{0.01, 0.001, 0.0001} \\
                        && Minibatch size & \multicolumn{3}{c}{64, 128, 256} \\
                        && LSTM hidden units ($d_h$) & 0.5C, 1C, 2C, 3C, 4C & 0.5C, 1C, 2C & 0.5C, 1C, 2C \\
                        && LSTM output size ($d_r$) & 0.5C, 1C, 2C, 3C, 4C & 0.5C, 1C, 2C & 0.5C, 1C, 2C \\
                        && FC hidden units ($n_{\text{FC}}$) & 0.5$d_r$, 1$d_r$, 2$d_r$, 3$d_r$, 4$d_r$ & 0.5$d_r$, 1$d_r$, 2$d_r$ & 0.5$d_r$, 1$d_r$, 2$d_r$ \\
                        && LSTM dropout rate ($p$) &  \multicolumn{3}{c}{0.1, 0.2, 0.3, 0.4, 0.5} \\
                        && Number of epochs ($n_e$) & 50 & 400 & 200 \\
                    \midrule
                    \multirow{10}{*}{\shortname} & \multirow{10}{*}{\begin{tabular}{l} --- \end{tabular}}
                        & Transformer blocks ($B$) & 1 & 1, 2 & 1, 2 \\
                        && Learning rate ($\eta$) & \multicolumn{3}{c}{0.01, 0.001, 0.0001} \\
                        && Minibatch size & 64, 128, 256 & 32, 64 & 32, 64 \\
                        && Attention heads ($n_h$) & 2 & 2, 3 & 2, 3 \\
                        && Transformer units ($d_h$) & 1C, 2C, 3C, 4C & 0.5C, 1C, 2C & 0.5C, 1C, 2C \\
                        && BR size ($d_r$) & 0.5C, 1C, 2C, 3C, 4C & 0.5C, 1C, 2C & 0.5C, 1C, 2C \\
                        && FC hidden units ($n_{\text{FC}}$) & 0.5$d_r$, 1$d_r$, 2$d_r$, 3$d_r$, 4$d_r$ & 0.5$d_r$, 1$d_r$, 2$d_r$ & 0.5$d_r$, 1$d_r$, 2$d_r$ \\
                        && Sequential dropout rate ($p$) &  \multicolumn{3}{c}{0.1, 0.2, 0.3, 0.4, 0.5} \\
                        && Max positional encoding ($l_{\text{max}}$) & 15 & 20 & 30 \\
                        && Number of epochs ($n_e$) & 150 & 400 & 300 \\
                    \bottomrule
                \end{tabular}
    \end{center}
\end{table*}

\begin{table*}[t]
    \caption{Additional information on model-specific hyperparameters (kept the same for all experiments).}
    \label{tab:common-hyperparams}
    \begin{center}
        \begin{small}
                \begin{tabular}{l|l|l|l}
                    \toprule
                    Model & Sub-model & Hyperparameter & Value \\
                    \midrule
                    \multirow{12}{*}{RMSNs} 
                    & \multirow{3}{*}{\begin{tabular}{l} Propensity treatment network \end{tabular}} 
                        & Random search iterations & 50\\
                       && Input size (C) & $d_a$ \\
                       && Output size & $d_a$ \\
                    \cmidrule{2-4}
                     & \multirow{3}{*}{\begin{tabular}{l} Propensity history network \end{tabular}} 
                        & Random search iterations & 50\\
                       && Input size (C) & $d_a + d_y + d_x + d_v$ \\
                       && Output size & $d_a$ \\
                    \cmidrule{2-4}
                    & \multirow{3}{*}{\begin{tabular}{l}Encoder \end{tabular}} 
                        & Random search iterations & 50\\
                       && Input size (C) & $d_a + d_y + d_x + d_v$ \\
                       && Output size & $d_y$ \\
                    \cmidrule{2-4}
                    & \multirow{3}{*}{\begin{tabular}{l} Decoder \end{tabular}} 
                        & Random search iterations & 20\\
                       && Input size (C) & $d_a + d_y + d_v$ \\
                       && Output size & $d_y$ \\
                    \midrule
                    \multirow{8}{*}{CRN} 
                    & \multirow{4}{*}{\begin{tabular}{l} Encoder \end{tabular}} 
                        & Random search iterations & 50\\
                        && Input size (C) & $d_a + d_y + d_x + d_v$ \\
                        && Output size & $d_a + d_y$ \\
                        && Gradient reversal coefficient ($\lambda$) & 1.0 \\
                    \cmidrule{2-4}
                    & \multirow{4}{*}{\begin{tabular}{l} Decoder \end{tabular}} 
                        & Random search iterations & 30 \\
                       && Input size (C) & $d_a + d_y + d_v$ \\
                       && Output size & $d_a + d_y$ \\
                       && Gradient reversal coefficient ($\lambda$) & 1.0 \\
                    \midrule
                    \multirow{6}{*}{G-Net} & \multirow{6}{*}{\begin{tabular}{l} ---\end{tabular}}
                        & Random search iterations & 50\\
                        && Input size (C) & $d_a + d_y + d_x + d_v$ \\
                        && Output size & $d_y + d_x$ \\
                        && MC samples ($M$) & 50 \\
                        && Number of covariate groups & 1 \\
                        && Holdout dataset ratio (empirical residuals) & 10\% \\
                    \midrule
                    \multirow{6}{*}{\shortname} & \multirow{6}{*}{\begin{tabular}{l} --- \end{tabular}}
                        & Random search iterations & 50\\
                        && Input size (C) & $\max\{d_a, d_y, d_x, d_v\}$ \\
                        && Output size & $d_a + d_y$ \\
                        && CDC coefficient ($\alpha$) & 0.01 \\
                        && EMA of model weights ($\beta$) & 0.99 \\
                        && Positional encoding & relative, trainable \\
                    \bottomrule
                \end{tabular}
        \end{small}
    \end{center}
    \vskip -0.1in
\end{table*}

\clearpage
\section{Encoder-Decoder Causal Transformer} \label{app:EDCT}
\subsection{Overview}

Here, we summarize the \emph{Encoder-Decoder} \longname~(EDCT) from our ablation study, namely a single-subnetwork version of out \shortname. The EDCT consists of transformer-based encoder and decoder (see Fig.~\ref{fig:EDCT}). The encoder builds a treatment-invariant sequence of representations of the history $\bar{\mathbf{\Phi}}_t = (\mathbf{\Phi}_1, \dots, \mathbf{\Phi}_t )$, balanced with a custom adversarial objective. The decoder subsequently uses $\bar{\mathbf{\Phi}}_t$ as cross-attention keys and values for estimating outcomes of future treatments.

We start by mapping the concatenated time-varying covariates, left-shifted treatments, outcomes and static covariates to a hidden state space of dimensionality $d_h$ via fully-connected linear layer:
\begin{equation}
    \mathbf{h}_t^0 = \operatorname{Linear}( \operatorname{Concat}(\mathbf{A}_{t-1}, \mathbf{Y}_{t},  \mathbf{X}_{t},  \mathbf{V})).
\end{equation}

In the case of decoder, we apply a similar input transformation
\begin{equation}
    \mathbf{h}_t^0 = \operatorname{Linear}( \operatorname{Concat}(\mathbf{a}_{t-1}, \hat{\mathbf{Y}}_{t}, \mathbf{V})),
\end{equation}
where $\hat{\mathbf{Y}}_{t}$ are autoregressively-fed model outputs. 

We then stack of $B$ identical encoder/decoder blocks or layers, which transform the whole sequence of hidden states $\big(\mathbf{h}_1^0, \dots, \mathbf{h}_t^0\big)$ in quadratic time, depending on sequence length $t$. This is given by
\begin{align}
    & \mathrm{H}^b = \big(\mathbf{h}_1^b, \dots, \mathbf{h}_t^b\big)^\top \in \mathbb{R}^{t \times d_h} , \label{eq:seq-concat}\\
    & \mathrm{H}^b = \text{Block}_b(\mathrm{H}^{b - 1}), \qquad b \in \{1, \dots, B\} ,
\end{align}
where $B$ is the total number of blocks.

The encoder uses its hidden states  to infer keys, queries, and values (thus: self-attention). In contrast, the decoder has both self- and cross-attentions. For later, we use keys and values, inferred from the sequence of balanced representations of the history. Note that the dimensionality of hidden decoder state is set such that it matches the size of the balanced representations of the encoder, i.e., $d_h = d_r$. 

\begin{figure*}[h!]
    \vskip 0.2in
    \begin{center}
    \centerline{\includegraphics[width=\textwidth]{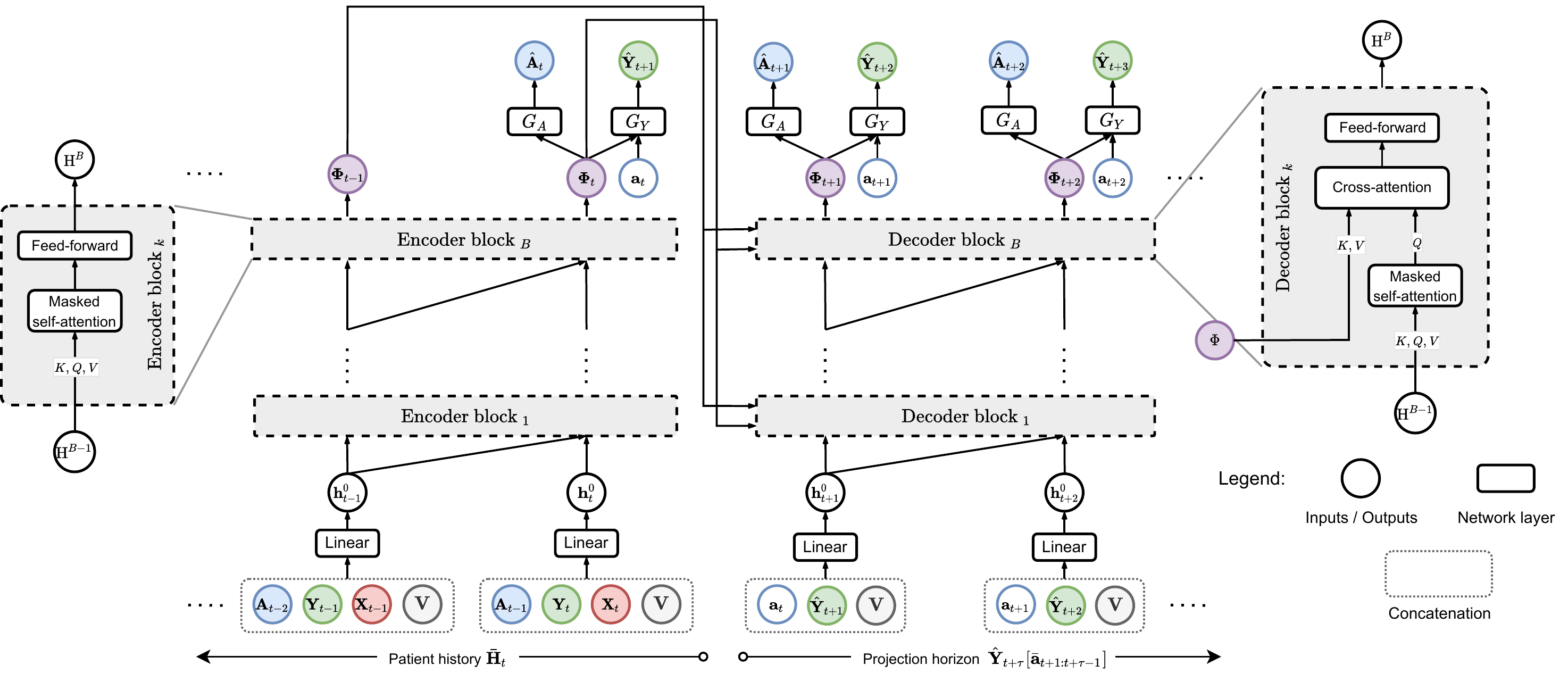}}
    \caption{Architecture of the encoder-decoder causal transformer (EDCT). Residual connections with layer normalizations are omitted for clarity. The encoder is trained to perform one-step-ahead prediction $\hat{\mathbf{Y}}_{t+1}[\mathbf{a}_t]$, whereas the decoder uses the pretrained balanced representations of history from the encoder. Based on them, the decoders makes predictions for the projection horizon $\tau \ge 2$ via  $\hat{\mathbf{Y}}_{t+\tau}[\bar{\mathbf{a}}_{t+1: t+\tau-1}]$.}
    \label{fig:EDCT}
    \end{center}
\end{figure*}

Lastly, we take the balanced representations from the last transformer block as outputs. Here, we apply an additional fully-connected linear layer and exponential linear unit (ELU) non-linearity as in Eq.~\eqref{eq:output-repr}.

We make slightly adaptations to the relative positional encoding for the cross-attention of the decoder (Eq.~\eqref{eq:cross-attention}). The decoder works a~priori with the continuation of the same sequence. However, it is here beneficial to use the sequence for construction of $a^V_{ij}$ and $a^K_{ij}$, so that $j \in \{1, \dots, t\}$ and $i \in \{t + 1, \dots, t + \tau - 1\}$. Notably, relative positional encodings are shared neither between encoder and decoder, nor between self-attention and the cross-attention of the decoder.

We show the EDCT in Fig.~\ref{fig:EDCT}. We further formalize the encoder/decoder transformer blocks in the following section. 

\clearpage
\subsection{Transformer Blocks in EDCT} 
\label{app:EDCT-blocks}

The EDCT encoder block is defined in a following way:
\begin{align}
        \tilde{\mathrm{H}}^{b - 1} &= \operatorname{LN}\Big(\operatorname{MHA}\big(Q(\mathrm{H}^{b - 1}), K(\mathrm{H}^{b - 1}), V(\mathrm{H}^{b - 1})\big) + \mathrm{H}^{b - 1}\Big) ,
    \label{eq:block-multi-head} \\ 
     \mathrm{H}^{b} &= \operatorname{LN}\big(\operatorname{FF}(\tilde{\mathrm{H}}^{b - 1}) + \tilde{\mathrm{H}}^{b - 1}\big) . \label{eq:block-ff}
\end{align}

The EDCT decoder block adds a cross-attention layer after the self-attention (i.e., between Eq.~\eqref{eq:block-multi-head} and Eq.~\eqref{eq:block-ff}). This is formalized by 
\begin{align}
        \tilde{\mathrm{H}}^{b - 1} &= \operatorname{LN}\Big(\operatorname{MHA}\big(Q(\mathrm{H}^{b - 1}), K(\mathrm{H}^{b - 1}), V(\mathrm{H}^{b - 1})\big) + \mathrm{H}^{b - 1}\Big) ,
    \\
        \tilde{\tilde{\mathrm{H}}}^{b - 1} &=  \operatorname{LN}\Big(\operatorname{MHA}\big(Q(\tilde{\mathrm{H}}^{b - 1}), K((\bar{\mathbf{\Phi}}_t)^\top), V((\bar{\mathbf{\Phi}}_t)^\top)\big) + \tilde{\mathrm{H}}^{b - 1}\Big) , \label{eq:cross-attention} 
\\
    \mathrm{H}^{b} &= \operatorname{LN}\big(\operatorname{FF}(\tilde{\tilde{\mathrm{H}}}^{b - 1}) + \tilde{\tilde{\mathrm{H}}}^{b - 1}\big) ,
\end{align}
where $\bar{\mathbf{\Phi}}_t$ is a sequence of the encoder representations (i.e., the encoded history $\bar{\mathbf{H}}_t$), as transformed according to Eq.~\eqref{eq:seq-concat}).   

\subsection{Hyperparameter Tuning for EDCT} 
\label{app:EDCT-hyparams}

We performed a hyperparameter selection for EDCT in a similar manner to CRN, see Appendix~\ref{app:hparams}. We provide hyperparameter ranges for both encoder and decoder in Table~\ref{tab:edct-data-specific-hyperparams}. Other hyperparameters are kept fixed, the same as for CRN in Table~\ref{tab:common-hyperparams}. For the fully-synthetic dataset based on the tumor growth simulator, we add a two-layer (two-block) architecture to the search range. This was done to keep the number of total trainable parameters similar to other baselines. We employed trainable relative positional encodings. Notably, the decoder has $l_{\text{max}}$ set to $\tau_{\text{max}}$ for the self-attention and to $l_{\text{max}}$ of the encoder for the cross-attention.

\begin{table*}[ht]
    \caption{Ranges for hyperparameter tuning for EDCT across experiments. Here, we distinguish (1)~data using the tumor growth (TG) simulator ($=$experiments with fully-synthetic data) and (2)~semi-synthetic (SS) data. C is the input size. $d_r$ is the size of balanced representation (BR) or the output of LSTM (in the case of G-Net).}
    \label{tab:edct-data-specific-hyperparams}
    \begin{center}
        \footnotesize
                \begin{tabular}{l|l|l|c|c}
                    \toprule
                    Model & Sub-model & Hyperparameter & Range (TG simulator) & Range (SS data) \\
                    \midrule
                    \multirow{21}{*}{EDCT} & \multirow{10}{*}{\begin{tabular}{l} Encoder \end{tabular}} 
                       & Transformer blocks ($B$) & 1, 2 & 1, 2 \\
                       && Learning rate ($\eta$) & \multicolumn{2}{c}{0.01, 0.001, 0.0001}\\
                       && Minibatch size & 64, 128, 256 & 32, 64, 128 \\
                       && Attention heads ($n_h$) & 2 & 2, 3 \\
                       && Transformer units ($d_h$) & 1C, 2C, 3C, 4C & 0.5C, 1C, 2C\\
                       && BR size ($d_r$) & 0.5C, 1C, 2C, 3C, 4C &  0.5C, 1C, 2C \\
                       && FC hidden units ($n_{\text{FC}}$) & 0.5$d_r$, 1$d_r$, 2$d_r$, 3$d_r$, 4$d_r$ & 0.5$d_r$, 1$d_r$, 2$d_r$ \\
                       && Sequential dropout rate ($p$) &  \multicolumn{2}{c}{0.1, 0.2, 0.3, 0.4, 0.5} \\
                       && Max positional encoding (self-attention) ($l_{\text{max}}$) & 15 & 20 \\
                       && Number of epochs ($n_e$) &  100 & 400 \\
                    \cmidrule{2-5}
                    & \multirow{11}{*}{\begin{tabular}{l} Decoder \end{tabular}}
                       & Transformer blocks ($B$) & 1, 2 & 1, 2 \\
                       && Learning rate ($\eta$) & \multicolumn{2}{c}{0.01, 0.001, 0.0001} \\
                       && Minibatch size & 256, 512, 1024 & 128, 256, 512 \\
                       && Attention heads ($n_h$) & 2 & 2, 3 \\
                       && Transformer units ($d_h$) & \multicolumn{2}{c}{BR size of encoder} \\
                       && BR size ($d_r$) & 0.5C, 1C, 2C, 3C, 4C & 0.5C, 1C, 2C \\
                       && FC hidden units ($n_{\text{FC}}$) & 0.5$d_r$, 1$d_r$, 2$d_r$, 3$d_r$, 4$d_r$ & 0.5$d_r$, 1$d_r$, 2$d_r$ \\
                       && Sequential dropout rate ($p$) &  \multicolumn{2}{c}{0.1, 0.2, 0.3, 0.4, 0.5} \\
                       && Max positional encoding (self-attention) ($l_{\text{max}}$) & \multicolumn{2}{c}{$\tau_{\text{max}}$} \\
                       && Max positional encoding (cross-attention) ($l_{\text{max}}$) & 15 & 20 \\
                       && Number of epochs ($n_e$) &  100 & 400 \\
                    \bottomrule
                \end{tabular}
    \end{center}
\end{table*}

\clearpage
\section{Details on Experiments with Synthetic Data} 
\label{app:syn}

\subsection{Summary of Tumor Growth Simulator} 

The tumor growth (TG) simulator \cite{geng2017prediction} models the volume of tumor $\mathbf{Y}_{t+1}$ for $t+1$ days after cancer diagnosis (so that the outcome is one-dimensional, \ie, $d_y = 1$). The model has two binary treatments: (i)~radiotherapy ($\mathbf{A}^r_t$) and (ii)~chemotherapy ($\mathbf{A}^c_t$). These are modeled as follows: (i)~Radiotherapy when assigned to a patient has an immediate effect $d(t)$ on the next outcome. (ii)~Chemotherapy affects several future outcomes with exponentially decaying effect $C(t)$ via
\begin{align}
    \mathbf{Y}_{t+1} &= \bigg(1 + \rho \log \Big(\frac{K}{\mathbf{Y}_{t}}\Big) - \beta_c C_t - (\alpha_r d_t + \beta_r d_t^2) + \varepsilon_t \bigg) \mathbf{Y}_{t} ,
\end{align}
where $\rho, K, \beta_c, \alpha_r, \beta_r$ are parameters in the simulation and where $\varepsilon_t \sim N(0, 0.01^2)$ is independently sampled noise. Here, the parameters $\beta_c, \alpha_r, \beta_r$ describe the individual response of each patient and are sampled from a mixture of truncated normal distributions with three mixture components. For exact values of parameters, refer to the code implementation\footnote{Code is available online: \url{https://github.com/Valentyn1997/CausalTransformer/blob/main/src/data/cancer_sim/cancer_simulation.py}}. The indices of mixture components are considered as static covariates ($d_v = 1$). Time-varying confounding is introduced by a biased treatments assignment, identical for both treatments; \ie,
\begin{align}
    & \mathbf{A}^c_t, \mathbf{A}^r_t \sim \operatorname{Bernoulli}\bigg( \sigma \Big( \frac{\gamma}{D_{\text{max}}}(\bar{D}_{15}(\bar{\mathbf{Y}}_{t-1}) - D_{\text{max}} / 2) \Big) \bigg) ,
\end{align}
where $\sigma(\cdot)$ is a sigmoid activation, $D_{\text{max}}$ is the maximum tumor diameter, $\bar{D}_{15}(\bar{\mathbf{Y}}_{t-1})$ is the average tumor diameter over the last 15 days, and $\gamma$ is a confounding parameter. We can control the level of confounding via $\gamma$. For $\gamma=0$, the treatment assignment is fully randomized. For increasing values, the the amount of time-varying confounding becomes also larger.

In our implementation, we proceed as follows. For RMSNs, we insert two binary treatments directly. For all other methods, we use a single-categorical treatment out of the set $\{(\mathbf{A}^c_t=0, \mathbf{A}^r_t=0)$, $(\mathbf{A}^c_t=1, \mathbf{A}^r_t=0)$, $(\mathbf{A}^c_t=0, \mathbf{A}^r_t=1)$, $(\mathbf{A}^c_t=1, \mathbf{A}^r_t=1 )\}$.

For each patient in the test set and each time step, we simulate several counterfactual trajectories, depending on $\tau$. For one-step-ahead prediction, we simulate all four combinations of one-step-ahead counterfactual outcomes $\mathbf{Y}_{t+1}$. This corresponds to the tumor volume under all possible combinations of treatment assignments. For multi-step-ahead prediction, the number of all potential outcomes of $\mathbf{Y}_{t+2}, \dots, \mathbf{Y}_{t+\tau_{\text{max}}}$ growths exponentially with the projection horizon $\tau_{\text{max}}$. Therefore, we adopt two alternative schemes:
\begin{enumerate}
\item \emph{Single sliding treatment}. To test that the correct timing of a treatment is chosen, we simulate trajectories with a single treatment but where the treatments are iteratively moved over a window ranging from $t$ to $t + \tau_{\text{max}} - 1$. This effectively results in $2(\tau_{\text{max}} - 1)$ trajectories. 
\item \emph{Random trajectories}. Here, we simulate a fixed number of trajectories, \ie, $2\,(\tau_{\text{max}} - 1)$, each with random treatment assignments. 
\end{enumerate} 
The former setting is identical to the one in \cite{bica2020estimating}. We additionally included the latter setting, as it may also involve more diverse trajectories with multiple treatments. Thereby, we hope to make our analysis more realistic with respect to clinical practice. 

For each level of confounding $\gamma$, we simulate 10,000 patient trajectories for training, 1,000 for validation, and 1,000 trajectories for testing. We limit the length for trajectories to max. 60 time steps (some patients have shorter trajectories due to recovery or death). Here, and in all following experiments, we apply hyperparameter tuning.

\subsection{Experimental Details}

\textbf{Hyperparameter tuning.} We perform hyperparameter tuning separately for all models as well as all the different values of the confounding amount $\gamma$. For this, we use the 1,000 factual patient time-series from the validation set. Details are in \ref{app:hparams}.

\textbf{Performance measurement:} We retrain the models on five simulated datasets with different random seeds. We then report the averaged root mean square error (RMSE) on the test set, that is, for hold-out data. We report a normalized RMSE, where we normalize by the maximum tumor volume $V_{\text{max}} = 1150\,\text{cm}^3$. 

We acknowledge that our results are slightly different from those reported in \cite{lim2018forecasting,bica2020estimating}. The aforementioned papers calculate the test RMSE based on both counterfactual trajectories after rolling origin \textbf{and} historical factual trajectories \emph{before} rolling origin. However, the latter biases evaluation towards factual performance. For that reason, we opted for a more challenging evaluation that directly matches our aim, namely predicting counterfactuals over time. Therefore, we only measure performance with respect to counterfactual outcomes after rolling origin (and thus without considering historical factual patient trajectories). 

\subsection{Additional Results}

In the following, we provide additional results for one-step-ahead prediction (Table~\ref{tab:tg-sim-one-step}), $\tau$-step-ahead prediction in a setting with single sliding treatment (Table~\ref{tab:tg-sim-tau-step-timing}), and $\tau$-step-ahead prediction with random trajectories (Table~\ref{tab:tg-sim-tau-step-random}). Note that \shortname ($\alpha = 0$) uses the same model and hyperparameters as \shortname. The only difference is that we switched off our CDC loss. 

In the setting of random trajectories (Table~\ref{tab:tg-sim-tau-step-random}), RMSEs become lower for increasing projection horizons. This can be expected as the application of treatment should decrease the tumor volume. This results in a lower error of estimation. Importantly, the results confirm the superiority of our \longname.

\begin{table*}[h!]
    \caption{Normalized RMSE for one-step-ahead prediction. Shown: mean and standard deviation over five runs (lower is better). Parameter $\gamma$ is the the amount of time-varying confounding: higher values mean larger treatment assignment bias.}
    \label{tab:tg-sim-one-step}
    \begin{center}
        \begin{small}
        \begin{tabular}{l|ccccc}
\toprule
{} &            $\gamma = 0$ &            $\gamma = 1$ &            $\gamma = 2$ &            $\gamma = 3$ &            $\gamma = 4$ \\
\midrule
MSMs                      &           1.107 ± 0.113 &           1.222 ± 0.108 &           1.410 ± 0.089 &           1.680 ± 0.118 &           2.023 ± 0.230 \\
RMSNs                     &           1.037 ± 0.123 &           1.104 ± 0.116 &           1.124 ± 0.115 &           1.268 ± 0.116 &           1.399 ± 0.196 \\
CRN                      &           0.782 ± 0.053 &           0.817 ± 0.050 &           0.887 ± 0.072 &           1.063 ± 0.124 &           1.301 ± 0.144 \\
G-Net                    &           0.832 ± 0.052 &           0.873 ± 0.080 &           1.000 ± 0.062 &           1.299 ± 0.303 &           1.375 ± 0.250 \\
\midrule
CT ($\alpha = 0$) (ours) &           0.778 ± 0.065 &  \textbf{0.790 ± 0.081} &           0.869 ± 0.075 &  \textbf{1.024 ± 0.148} &  \textbf{1.300 ± 0.220} \\
CT (ours)                &  \textbf{0.775 ± 0.063} &           0.797 ± 0.066 &  \textbf{0.859 ± 0.070} &           1.046 ± 0.147 &           1.316 ± 0.229 \\
\bottomrule
\multicolumn{6}{l}{Lower $=$ better (best in bold). }
\end{tabular}

        \end{small}
    \end{center}
    \vskip -0.1in
\end{table*}

\begin{table*}[h!]
    \caption{Normalized RMSE for $\tau$-step-ahead prediction (here: random trajectories setting). Shown: mean and standard deviation over five runs (lower is better). Parameter $\gamma$ is the the amount of time-varying confounding: higher values mean larger treatment assignment bias.}
    \label{tab:tg-sim-tau-step-random}
    \begin{center}
        \begin{small}
        \begin{tabular}{ll|ccccc}
\toprule
           &           &          $\gamma = 0$ &          $\gamma = 1$ &          $\gamma = 2$ &          $\gamma = 3$ &          $\gamma = 4$ \\
\midrule
$\tau = 2$ & MSMs &           1.04 ± 0.04 &           1.21 ± 0.13 &           1.50 ± 0.23 &           1.73 ± 0.43 &           1.85 ± 0.71 \\
           & RMSNs &           1.01 ± 0.09 &           1.03 ± 0.12 &           1.00 ± 0.10 &           1.13 ± 0.16 &           1.09 ± 0.22 \\
           & CRN &           0.77 ± 0.04 &  \textbf{0.76 ± 0.05} &  \textbf{0.81 ± 0.07} &           0.94 ± 0.13 &           1.12 ± 0.25 \\
\vspace{0.9mm}
           & G-Net &           0.94 ± 0.13 &           0.95 ± 0.09 &           1.01 ± 0.05 &           1.10 ± 0.13 &           1.20 ± 0.26 \\
           & CT ($\alpha = 0$) (ours) &           0.76 ± 0.06 &  \textbf{0.76 ± 0.05} &           0.82 ± 0.07 &           0.92 ± 0.21 &           1.09 ± 0.28 \\
           & CT (ours) &  \textbf{0.75 ± 0.06} &           0.77 ± 0.06 &  \textbf{0.81 ± 0.08} &  \textbf{0.90 ± 0.18} &  \textbf{1.06 ± 0.27} \\
\midrule
$\tau = 3$ & MSMs &           1.00 ± 0.04 &           1.14 ± 0.12 &           1.38 ± 0.22 &           1.54 ± 0.38 &           1.51 ± 0.59 \\
           & RMSNs &           0.96 ± 0.05 &           1.02 ± 0.09 &           0.98 ± 0.10 &           1.11 ± 0.20 &           1.17 ± 0.29 \\
           & CRN &           0.78 ± 0.03 &  \textbf{0.78 ± 0.06} &  \textbf{0.83 ± 0.09} &           1.05 ± 0.16 &           1.23 ± 0.32 \\
\vspace{0.9mm}
           & G-Net &           1.01 ± 0.15 &           1.03 ± 0.12 &           1.07 ± 0.07 &           1.15 ± 0.20 &           1.35 ± 0.32 \\
           & CT ($\alpha = 0$) (ours) &           0.76 ± 0.04 &  \textbf{0.78 ± 0.06} &  \textbf{0.83 ± 0.10} &           0.95 ± 0.25 &           1.16 ± 0.37 \\
           & CT (ours) &  \textbf{0.75 ± 0.04} &           0.79 ± 0.06 &  \textbf{0.83 ± 0.11} &  \textbf{0.93 ± 0.23} &  \textbf{1.12 ± 0.32} \\
\midrule
$\tau = 4$ & MSMs &           0.90 ± 0.06 &           1.02 ± 0.11 &           1.22 ± 0.21 &           1.31 ± 0.31 &           1.25 ± 0.51 \\
           & RMSNs &           0.89 ± 0.06 &           0.98 ± 0.08 &           0.92 ± 0.10 &           1.06 ± 0.22 &           1.15 ± 0.31 \\
           & CRN &           0.74 ± 0.03 &  \textbf{0.74 ± 0.07} &           0.80 ± 0.10 &           1.07 ± 0.17 &           1.23 ± 0.34 \\
\vspace{0.9mm}
           & G-Net &           0.97 ± 0.15 &           0.97 ± 0.13 &           1.01 ± 0.08 &           1.07 ± 0.21 &           1.33 ± 0.34 \\
           & CT ($\alpha = 0$) (ours) &           0.72 ± 0.03 &           0.75 ± 0.06 &  \textbf{0.79 ± 0.11} &           0.93 ± 0.28 &           1.14 ± 0.39 \\
           & CT (ours) &  \textbf{0.71 ± 0.03} &           0.75 ± 0.06 &           0.80 ± 0.12 &  \textbf{0.90 ± 0.26} &  \textbf{1.07 ± 0.35} \\
\midrule
$\tau = 5$ & MSMs &           0.80 ± 0.07 &           0.89 ± 0.10 &           1.06 ± 0.20 &           1.10 ± 0.27 &           1.08 ± 0.47 \\
           & RMSNs &           0.81 ± 0.06 &           0.93 ± 0.07 &           0.85 ± 0.10 &           0.99 ± 0.22 &           1.09 ± 0.30 \\
           & CRN &           0.68 ± 0.04 &  \textbf{0.68 ± 0.07} &           0.75 ± 0.10 &           1.03 ± 0.16 &           1.17 ± 0.34 \\
\vspace{0.9mm}
           & G-Net &           0.88 ± 0.14 &           0.88 ± 0.14 &           0.92 ± 0.08 &           0.97 ± 0.21 &           1.26 ± 0.36 \\
           & CT ($\alpha = 0$) (ours) &  \textbf{0.66 ± 0.03} &           0.69 ± 0.06 &  \textbf{0.73 ± 0.11} &           0.88 ± 0.29 &           1.08 ± 0.38 \\
           & CT (ours) &  \textbf{0.66 ± 0.03} &           0.70 ± 0.06 &           0.74 ± 0.12 &  \textbf{0.84 ± 0.26} &  \textbf{1.01 ± 0.34} \\
\midrule
$\tau = 6$ & MSMs &           0.71 ± 0.07 &           0.78 ± 0.09 &           0.91 ± 0.18 &           0.93 ± 0.23 &           0.99 ± 0.44 \\
           & RMSNs &           0.73 ± 0.05 &           0.87 ± 0.06 &           0.77 ± 0.09 &           0.90 ± 0.21 &           1.01 ± 0.28 \\
           & CRN &           0.62 ± 0.04 &  \textbf{0.62 ± 0.07} &           0.70 ± 0.09 &           0.96 ± 0.15 &           1.10 ± 0.32 \\
\vspace{0.9mm}
           & G-Net &           0.79 ± 0.12 &           0.79 ± 0.13 &           0.82 ± 0.09 &           0.86 ± 0.20 &           1.18 ± 0.35 \\
           & CT ($\alpha = 0$) (ours) &  \textbf{0.59 ± 0.02} &           0.63 ± 0.06 &  \textbf{0.67 ± 0.11} &           0.80 ± 0.29 &           1.00 ± 0.36 \\
           & CT (ours) &  \textbf{0.59 ± 0.02} &           0.63 ± 0.06 &  \textbf{0.67 ± 0.12} &  \textbf{0.77 ± 0.25} &  \textbf{0.93 ± 0.32} \\
\bottomrule
\multicolumn{3}{l}{Lower $=$ better (best in bold). }
\end{tabular}

        \end{small}
    \end{center}
    \vskip -0.1in
\end{table*}

\begin{table*}[h!]
    \caption{Normalized RMSE for $\tau$-step-ahead prediction (here: single sliding treatment setting). Shown: mean and standard deviation over five runs (lower is better). Parameter $\gamma$ is the the amount of time-varying confounding: higher values mean larger treatment assignment bias.}
    \label{tab:tg-sim-tau-step-timing}
    \begin{center}
        \begin{small}
        \begin{tabular}{ll|ccccc}
\toprule
           &           &          $\gamma = 0$ &          $\gamma = 1$ &          $\gamma = 2$ &          $\gamma = 3$ &          $\gamma = 4$ \\
\midrule
$\tau = 2$ & MSMs &           1.33 ± 0.13 &           1.59 ± 0.20 &           1.88 ± 0.36 &           2.23 ± 0.63 &           2.51 ± 0.91 \\
           & RMSNs &           0.98 ± 0.12 &           1.10 ± 0.25 &           0.98 ± 0.08 &           1.18 ± 0.10 &  \textbf{0.94 ± 0.09} \\
           & CRN &           0.71 ± 0.07 &           0.75 ± 0.06 &           0.77 ± 0.04 &           0.94 ± 0.14 &           1.11 ± 0.22 \\
\vspace{0.9mm}
           & G-Net &           0.99 ± 0.16 &           0.99 ± 0.06 &           1.03 ± 0.09 &           1.10 ± 0.08 &           1.18 ± 0.16 \\
           & CT ($\alpha = 0$) (ours) &  \textbf{0.70 ± 0.09} &  \textbf{0.71 ± 0.09} &           0.76 ± 0.08 &  \textbf{0.90 ± 0.21} &           1.00 ± 0.21 \\
           & CT (ours) &  \textbf{0.70 ± 0.09} &           0.72 ± 0.09 &  \textbf{0.74 ± 0.07} &  \textbf{0.90 ± 0.13} &           1.01 ± 0.23 \\
\midrule
$\tau = 3$ & MSMs &           1.61 ± 0.15 &           1.90 ± 0.24 &           2.20 ± 0.42 &           2.53 ± 0.72 &           2.64 ± 0.95 \\
           & RMSNs &           0.98 ± 0.10 &           1.16 ± 0.21 &           1.00 ± 0.09 &           1.23 ± 0.12 &  \textbf{1.06 ± 0.14} \\
           & CRN &  \textbf{0.73 ± 0.06} &           0.78 ± 0.06 &           0.85 ± 0.06 &           1.16 ± 0.26 &           1.34 ± 0.37 \\
\vspace{0.9mm}
           & G-Net &           1.15 ± 0.20 &           1.16 ± 0.11 &           1.20 ± 0.15 &           1.24 ± 0.12 &           1.47 ± 0.22 \\
           & CT ($\alpha = 0$) (ours) &  \textbf{0.73 ± 0.08} &  \textbf{0.75 ± 0.08} &           0.82 ± 0.09 &           0.99 ± 0.25 &           1.13 ± 0.28 \\
           & CT (ours) &  \textbf{0.73 ± 0.08} &           0.76 ± 0.07 &  \textbf{0.79 ± 0.08} &  \textbf{0.98 ± 0.19} &           1.12 ± 0.27 \\
\midrule
$\tau = 4$ & MSMs &           1.79 ± 0.16 &           2.08 ± 0.26 &           2.37 ± 0.45 &           2.67 ± 0.76 &           2.62 ± 0.94 \\
           & RMSNs &           0.99 ± 0.10 &           1.18 ± 0.17 &           1.03 ± 0.11 &           1.28 ± 0.15 &  \textbf{1.21 ± 0.19} \\
           & CRN &  \textbf{0.76 ± 0.05} &           0.81 ± 0.07 &           0.93 ± 0.08 &           1.35 ± 0.38 &           1.55 ± 0.50 \\
\vspace{0.9mm}
           & G-Net &           1.25 ± 0.24 &           1.24 ± 0.14 &           1.27 ± 0.21 &           1.29 ± 0.14 &           1.64 ± 0.28 \\
           & CT ($\alpha = 0$) (ours) &  \textbf{0.76 ± 0.07} &  \textbf{0.79 ± 0.06} &           0.87 ± 0.11 &           1.06 ± 0.27 &  \textbf{1.21 ± 0.32} \\
           & CT (ours) &  \textbf{0.76 ± 0.07} &           0.80 ± 0.06 &  \textbf{0.85 ± 0.09} &  \textbf{1.05 ± 0.22} &  \textbf{1.21 ± 0.30} \\
\midrule
$\tau = 5$ & MSMs &           1.88 ± 0.17 &           2.15 ± 0.27 &           2.42 ± 0.45 &           2.69 ± 0.75 &           2.54 ± 0.90 \\
           & RMSNs &           1.00 ± 0.10 &           1.19 ± 0.14 &           1.08 ± 0.13 &           1.34 ± 0.19 &           1.39 ± 0.31 \\
           & CRN &  \textbf{0.79 ± 0.04} &           0.85 ± 0.07 &           1.01 ± 0.11 &           1.51 ± 0.47 &           1.72 ± 0.58 \\
\vspace{0.9mm}
           & G-Net &           1.29 ± 0.26 &           1.28 ± 0.18 &           1.32 ± 0.25 &           1.33 ± 0.15 &           1.76 ± 0.37 \\
           & CT ($\alpha = 0$) (ours) &  \textbf{0.79 ± 0.06} &  \textbf{0.83 ± 0.06} &           0.92 ± 0.12 &           1.12 ± 0.30 &           1.28 ± 0.34 \\
           & CT (ours) &  \textbf{0.79 ± 0.07} &           0.84 ± 0.07 &  \textbf{0.89 ± 0.11} &  \textbf{1.11 ± 0.24} &  \textbf{1.26 ± 0.31} \\
\midrule
$\tau = 6$ & MSMs &           1.89 ± 0.17 &           2.14 ± 0.26 &           2.39 ± 0.44 &           2.62 ± 0.73 &           2.41 ± 0.85 \\
           & RMSNs &           1.03 ± 0.10 &           1.21 ± 0.12 &           1.12 ± 0.14 &           1.41 ± 0.25 &           1.58 ± 0.45 \\
           & CRN &  \textbf{0.82 ± 0.04} &           0.89 ± 0.07 &           1.08 ± 0.13 &           1.64 ± 0.54 &           1.83 ± 0.62 \\
\vspace{0.9mm}
           & G-Net &           1.33 ± 0.27 &           1.31 ± 0.22 &           1.35 ± 0.29 &           1.35 ± 0.16 &           1.86 ± 0.47 \\
           & CT ($\alpha = 0$) (ours) &  \textbf{0.82 ± 0.04} &  \textbf{0.86 ± 0.05} &           0.96 ± 0.12 &           1.19 ± 0.33 &           1.32 ± 0.34 \\
           & CT (ours) &  \textbf{0.82 ± 0.05} &           0.88 ± 0.06 &  \textbf{0.93 ± 0.11} &  \textbf{1.16 ± 0.25} &  \textbf{1.29 ± 0.29} \\
\bottomrule
\multicolumn{7}{l}{Lower $=$ better (best in bold). }
\end{tabular}

        \end{small}
    \end{center}
    \vskip -0.1in
\end{table*}

\clearpage
\section{Details on Experiments with Semi-Synthetic Data} \label{app:ss-sim}

\subsection{Data} 

We used the MIMIC-extract \cite{wang2020mimic} with a standardized preprocessing pipeline of MIMIC-III dataset \cite{johnson2016mimic}. MIMIC-extract provides intensive care unit~(ICU) data aggregated at hourly levels. We used forward and backward filling for missing values and did standard normalization of all the continuous time-varying features.

For our semi-synthetic data, we then extract 25 different vital signs (as time-varying covariates) and 3 static covariates, \ie, gender, ethnicity, and age (as static covariates). The full list of features is given in the code of our GitHub repository for reproducibility. We one-hot-encode all static covariates (gender, ethnicity, and age) and use them later further for generating noise. Altogether, this results into a 44-dimensional feature vector ($d_v = 44$).

Our simulation of semi-synthetic data extends the basic idea of \cite{schulam2017reliable}. As such, we first generate untreated trajectories of outcomes under endogeneous and exogeneous dependencies and, then, sequentially apply treatments to the trajectory. Dependencies between treatments, outcomes, and time-varying covariates are assumed to be sparse, so outcomes are influenced by only a few treatments and time-varying covariates. Treatment assignment in turn depends on a few outcomes and time-varying covariates. 

Our semi-synthetic simulator proceeds as follows. \emph{First}, we select a cohort of 1,000 patients, which are randomly chosen from patients where the intensive care unit stay lasted at least 20 hours. For the selected cohort, we clip intensive care unit stays longer than 100 hours (so that $T^{(i)}$ ranges from 20 to 100). 

\emph{Second}, we simulate $d_{y}$ untreated outcomes $\mathbf{Z}_{t}^{j, (i)}, j = 1 \dots, d_{y}$, for each patient $i$ from the cohort. Therein, we combine (1)~an endogenous component ($\text{B-spline}(t)$ and random function $g^{j, (i)}(t)$); (2)~an exogenous dependency $f_Z^j(\mathbf{X}_{t}^{(i)})$ on a subset of current time-varying covariates; and (3)~independent random noise $\varepsilon_t$. Formally, we generate the simulations via
\begin{align}
    \mathbf{Z}_{t}^{j, (i)} = \underbrace{\alpha_S^j \, \text{B-spline}(t) + \alpha_g^j \, g^{j, (i)}(t)}_\text{endogenous} +  \underbrace{\alpha_f^j \, f_Z^j(\mathbf{X}_{t}^{(i)})}_{\text{exogenous}} + \underbrace{\varepsilon_t}_{\text{noise}} 
\end{align}
with $\varepsilon_t \sim N(0, 0.005^2)$ and where $\alpha_S^j$, $\alpha_g^j$, and $\alpha_f^j$ are weight parameters. Further, $\text{B-spline}(t)$ is sampled from a mixture of three cubic splines (one with rapid decline during all intensive care unit stay, one with a mild decline, and one stable); $g^{j, (i)}(\cdot)$ is sampled independently for each patient from Gaussian process with Matérn kernel; and $f_Z^j(\cdot)$ is sampled from a random Fourier features (RFF) approximation of an Gaussian process \cite{hensman2017variational}. Here, we specifically use RFF as they circumvent the need for tedious Cholesky decomposition when sampling random functions at many points of time-varying feature space $\mathbb{R}^{d_x}$. By combining all three components, we aim to simulate outcomes, which have endogeneous dependencies with different resolutions (global trends of B-splines and local correlation structure of Gaussian processes) and arbitrarily chosen exogeneous dependencies on other time-varying features. 

\emph{Third}, we sequentially simulate synthetic $d_a$ binary treatments $\mathbf{A}_{t}^{l}$, $l = 1, \dots, d_a$. We add confounding to the treatments by a subset of current time-varying covariates via a random function $f_Y^l(\mathbf{X}_{t})$. Subsequently, we average of the subset of previous $T_l$ treated outcomes $\bar{A}_{T_l}(\bar{\mathbf{Y}}_{t-1})$. Formally, we compute $\mathbf{A}_{t}^{l}$ via
\begin{align}
    p_{\mathbf{A}_{t}^{l}} & =  \sigma \Big( \gamma_A^l \bar{A}_{T_l}( \bar{\mathbf{Y}}_{t-1}) + \gamma_X^l f_Y^l(\mathbf{X}_{t}) +  b_l \Big), \\
    \mathbf{A}_{t}^{l} & \sim \operatorname{Bernoulli}\big( p_{\mathbf{A}_{t}^{l}} \big) ,
\end{align}
where $\sigma(\cdot)$ is the sigmoid activation, $ \gamma_A^l$ and $\gamma_X^l$ are confounding parameters, $b_l$ is a fixed bias, and $f_Y^l(\cdot)$ is sampled from an RFF approximation of a Gaussian process (similar to $f_Z^j(\cdot)$).

\emph{Fourth}, we apply treatments to the untreated outcomes. For this, we set $\mathbf{Y}_1 = \mathbf{Z}_1$. Each treatment $l$ is modeled so that it has a long-lasting effect on some outcome $j$, with maximal additive effect $\beta_{lj}$ right after application. Here, we assume that the treatment has an effect within a time window $t - w^l, \dots, t$. We further assume that the effect size of treatments is subject to an inverse-square decay over time. We also scale the effect by the probability $p_{\mathbf{A}_{t}^{l}}$. Afterward, the effects of multiple treatments are aggregated by taking the minimum across the treatment effects. Formally, we model this via
\begin{align}
    E^{j}(t) & = \sum_{i=t-w^l}^{t} \frac{\min_{l=1,\dots, d_a} \mathbbm{1}_{[\mathbf{A}_{i}^{l} = 1]} p_{\mathbf{A}_{i}^{l}} \beta_{lj}}{(w^l - i)^2} ,
\end{align} 
where $\beta_{lj}$ is the maximum effect size of treatment $l$. This is either constant for all the outcomes $j$, or zero, so that the treatment does not influence the outcome.   

\emph{Fifth}, we combine the above. That is, we simply add the simulated treatment effect $E^{j}(t)$ to untreated outcome; \ie,
\begin{align}
    \mathbf{Y}_{t}^j & = \mathbf{Z}_{t}^j + E^{j}(t) .
\end{align}

\emph{Sixth}, we generate our semi-synthetic dataset based on the above simulator. For exact values of all simulation parameters, we refer to code implementation. After simulating three synthetic binary treatments ($d_a = 3$) and two synthetic outcomes ($d_y = 2$), we split the cohort of 1,000 patients into train/validation/test subsets via a 60\% / 20\% / 20 \% split. For one-step-ahead prediction, we then simulate all $2^3 = 8$ counterfactual outcomes. For multiple-step-ahead prediction with $\tau_{\text{max}} = 10$, we sample 10 random trajectories for each patient/time step. 

\subsection{Experimental Details}

\textbf{Hyperparameter tuning.} We perform hyperparameter tuning separately for all models. For this, we use the 200 factual patient time-series from the validation subset. Details are in Appendix~\ref{app:hparams}.

\textbf{Performance measurement:} We retrain the models on five simulated datasets with different random seeds (random seeds for sampling from Gaussian processes are kept the same). We then report the averaged root mean square error (RMSE) on the test set, that is, for hold-out data. RMSE is calculated for standardized outcomes.

\clearpage
\section{Details on Experiments with Real-World Data} \label{app:real-world-data}

\subsection{Data} 

Similarly to the semi-synthetic data in Appendix~\ref{app:ss-sim}, we make use of the MIMIC-extract following a standardized preprocessing pipeline \cite{wang2020mimic}. The data gives measurements from intensive care units aggregated at hourly levels. We used forward and backward filling for missing values and did standard normalization of all the continuous time-varying features.

We use the same 25 vital signs ($d_x = 25$) and the 3 static features (also one-hot-encoded for categorical features, $d_v = 44$) as in the semi-synthetic experiments. Both time-varying covariates and static features serve as potential confounders. We use two binary treatments ($d_a = 2$): vasopressors and mechanical ventilation. We then estimate the factual outcome ($d_y = 1$): (diastolic) blood pressure. Here, it is known that this may be positively or negatively affected by vasopressors and mechanical ventilation, thus raising the question for clinical practitioners of how a patient trajectory may look like when such treatment is applied. 

For our experiments, we selected a cohort of 5,000 patients, randomly chosen from the patients with intensive care unit~(ICU) stays of at least 30 hours. For the selected cohort, we cut off ICU stays at 60 hours. We then split the cohort of 5,000 patients with a ratio of 70\%/15\%/15\% into train/validation/test subsets. We varied the implementation according to the projection horizon $\tau$. (i)~For the one-step-ahead prediction, we used all trajectories in the test set. (ii)~For a $\tau$-step-ahead prediction with $\tau \ge 2$, we proceed as follows. Let $\tau_{\text{max}} \ge \tau$ denote the maximum projection horizon. In our experiments, $\tau_{\text{max}} = 5$. We then extract all sub-trajectories with a length of at least $\tau_{\text{max}} + 1$ with a rolling origin, where we remove vital signs from time steps $1, \dots, T^{(i)} - \tau_{\text{max}} + 1$, respectively. We then make predictions but where a looking-ahead is prevented due to masking. Later, we report only the performance for the $\tau$-step-ahead prediction. 

\subsection{Experimental Details}

\textbf{Hyperparameter tuning.} We perform hyperparameter tuning separately for all models. For this, we use the 750 factual patient time-series from the validation subset. Details are in Appendix~\ref{app:hparams}.

\textbf{Performance measurement:} We retrain the models on five random sub-samples of the dataset with different random seeds. We then report the averaged root mean square error (RMSE) on the test set, that is, for hold-out data. RMSE is then unscaled to the original range with standard normalization parameters.

\clearpage
\section{Runtime and Model Size Comparison} \label{app:runtime}
\shortname with a single-stage training also provides a decent speed up for training and inference in comparison to other methods. In Table~\ref{tab:tg-sim-runtime}, we compare the total runtime of experiments, averaged over all confounding levels $\gamma = 0, \dots, 4$, for synthetic data. Among all neural models, our \shortname has the smallest runtime. Hence, our transformer architecture together with a single-stage training procedure with CDC loss not only improves the performance of counterfactual outcomes estimations but also achieves a substantial computational speed-up. In Table~\ref{tab:num_params}, we report the total number of trainable parameters for different models after hyperparameter tuning. For semi-synthetic and real-world data, \shortname turns out to be more parsimonious, than LSTM-based models.

\begin{table*}[h!]
    \caption{Runtime of experiments (all stages of training and inference) for both tasks of one- and $\tau$-step-ahead prediction, averaged over different $\gamma = 0, \dots, 4$ (lower is better). Total runtime includes data generation. Experiments are carried out on $1\times$ TITAN V GPU.}
    \label{tab:tg-sim-runtime}
    \begin{center}
        \begin{small}
        \begin{tabular}{l|l|rcl}
            \toprule
                    & Main stages of training \& inference & \multicolumn{3}{c}{Total runtime (in min)} \\
            \midrule
             MSMs   & 2 logistic regressions for IPTW \& linear regression & 3.5 &±& 0.3  \\
             RMSNs  & 2 networks for IPTW \& encoder \& decoder & 109.7 &±& 2.3 \\
             CRN    & encoder \& decoder & 75.3 &±& 17.5 \\
             G-Net  & single network \& MC sampling for inference & 118.0 &±& 2.0 \\
             \midrule
             CT (ours)  & single multi-input network & \textbf{13.5} &±& \textbf{4.8} \\
             \bottomrule
        \end{tabular}
        \end{small}
    \end{center}
    \vskip -0.1in
\end{table*}

\begin{table*}[h!]
    \centering
    
    \caption{Total number of trainable parameters of models after hyperparameter tuning. Here, we distinguish (1)~data using the tumor growth (TG) simulator ($=$experiments with fully-synthetic data), (2)~data from semi-synthetic (SS) benchmark, and (3)~real-world (RW) MIMIC-III data. The number is the sum of trainable parameters among all the sub-models for MSMs, RMSNs, and CRN.}
    \label{tab:num_params}
    \vskip 0.1in
    \begin{small}
    \begin{tabular}{l|ccccc|c|c}
        \toprule
                    & \multicolumn{5}{c|}{TG simulator} & \multirow{2}{*}{SS data} & \multirow{2}{*}{RW data}\\
                    & $\gamma = 0$ & $\gamma = 1$ & $\gamma = 2$ & $\gamma = 3$ & $\gamma = 4$ & & \\
        \midrule
         MSMs       & \multicolumn{5}{c|}{$<$100} & 3K & 1K \\
         RMSNs      & 20K   & 4K    & 23K   & 21K   & 22K   & 477K  & 947K \\
         CRN        & 4K    & 6K    & 8K    & 7K    & 8K    & 165K  & 219K \\
         G-Net      &  3K    & 2K   & 3K    & 4K    & 3K    & 151K  & 310K \\
         \midrule
         CT (ours)  &  11K   & 11K  & 10K   & 10K   & 10K   & 45K   & 69K   \\
         \bottomrule
    \end{tabular}
    \end{small}
    \vskip -0.1in
\end{table*}

\clearpage
\section{Visualization of Learned Representations} 
\label{app:t-sne}
Figure~\ref{fig:tsne}(a,b) visualizes the t-SNE embeddings for the balanced representations of \shortname. Here, we use the validation set of the fully-synthetic data from the tumor growth simulator. Colors show the health outcome (tumor volume). As the plots show, we observe several regions  where representations are indeed balanced, so that they appear non-predictive of the current treatment but expressive of the outcome. To this end, one can observe a continuous change in color (outcome). In severe cases, the points are colored in yellow when tumor size is comparatively large. As we can see, balancing then becomes challenging, as few patients with this condition receive no treatment.

\begin{figure}[h!]
    \centering
    \hfill
    \subfigure[t-SNE embeddings of balanced representations with indicated current treatments ]{\includegraphics[width=0.4\textwidth]{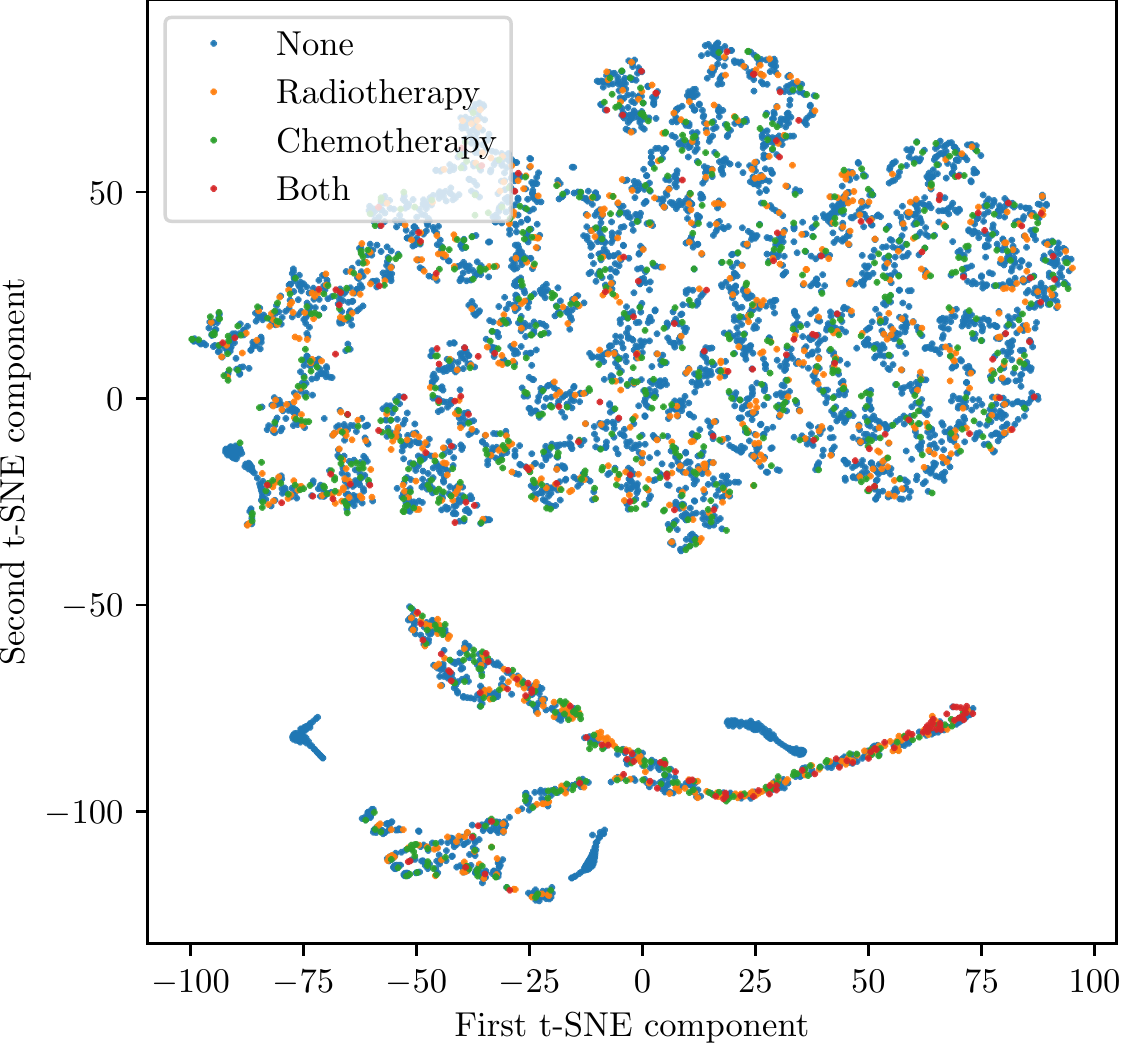}}\label{fig:tsne-treatments}
    \hfill
    \subfigure[t-SNE embeddings of balanced representations with indicated next outcomes]{\includegraphics[width=0.48\textwidth]{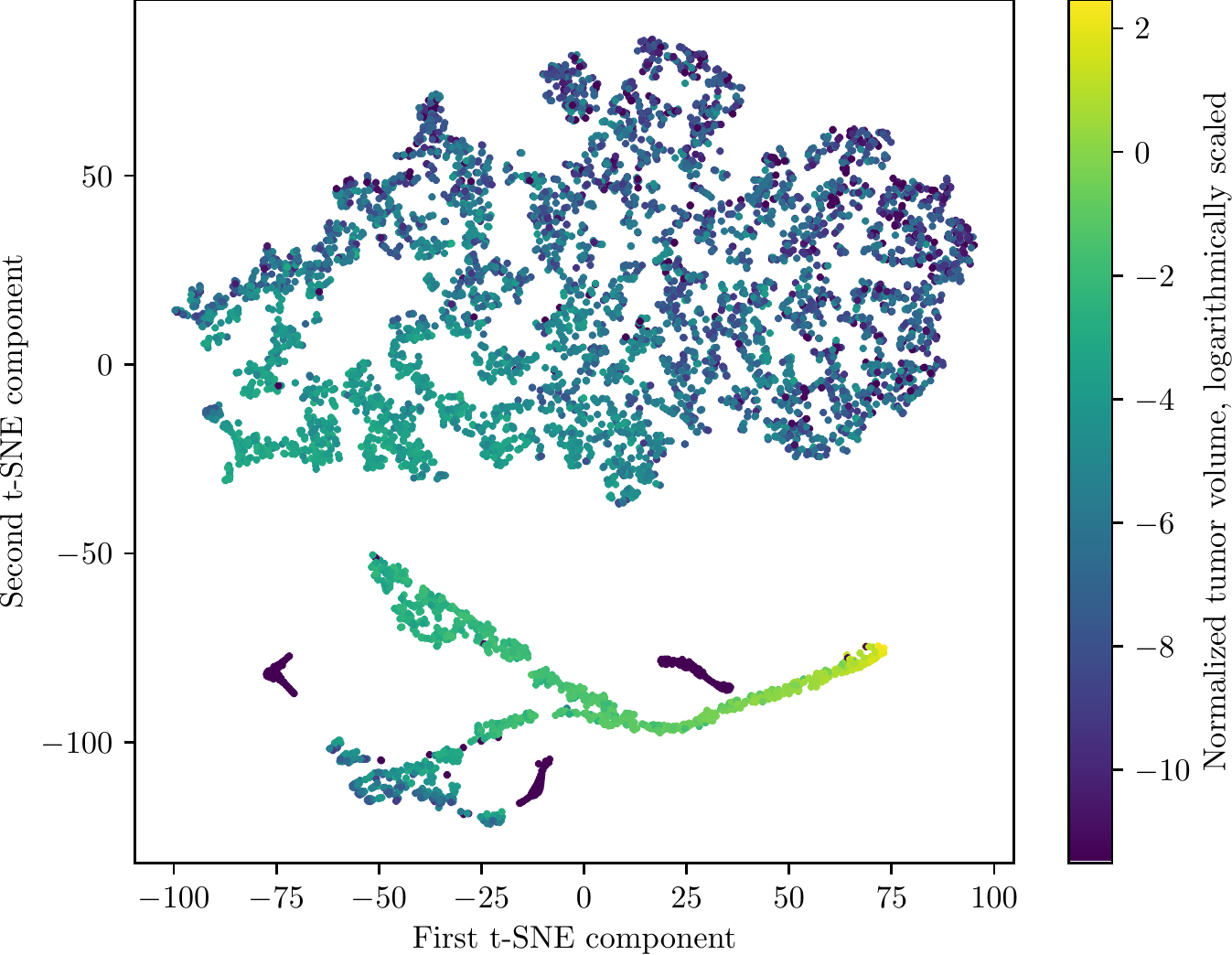}}\label{fig:tsne-outcomes}
    \hfill
    \caption{t-SNE embeddings of the balanced representations of \shortname. We display $N = 100$ patients from the fully-synthetic data (tumor growth simulator). Here: representations of the validation set ($\gamma=4$), where each patient trajectory contains 60 time steps, thus displaying 6,000 embeddings. Note the logarithmic scale for the outcomes (in color).}
    \label{fig:tsne}
    \vskip -0.2in
\end{figure}

\clearpage
\section{Case Study: Importance of Subnetworks} \label{app:subnetwork-importance}

In the following, we provide a case study for explainability. That is, we study the importance of different subnetworks. Such insights may help medical practitioners to ponder about the relevance of treatments for patient outcomes, or the relevance of time-varying covariates for patient outcomes. 

To this end, we examine the role of using multiple cross-attention for information exchange between three subnetworks of \shortname. We informally define an importance score of subnetwork $A$, $Y$, or $X$ as the difference in performance (\eg, with test RMSE) between full \shortname and \shortname with the correspondingly isolated subnetwork. Here, isolating a subnetwork means that we remove cross-attentions of the particular subnetwork. As such, it does not completely ignore the input sequence but only the interactions, as we still use sequences of all subnetworks representations at the latest stage of average pooling. Therefore, the importance score aims to explain how the connectivity of subnetworks via cross-attentions helps in estimating counterfactuals over time.

For our case study, we use the semi-synthetic benchmark and kept the same hyperparameters, as for the original \shortname. Figure~\ref{fig:subnetwork-importance} shows importance scores of each subnetwork for different prediction horizons $\tau$. We observe that subnetwork processing time-varying covariates has the largest importance score (red bars). Interestingly, the importance score for the treatment subnetwork is close to zero for a small prediction horizon $\tau$ and grows only for larger prediction horizons. This thus has implications: it suggests \emph{long-range} treatment effects.

\begin{figure}[h]
    \centering
    \includegraphics[width=\textwidth]{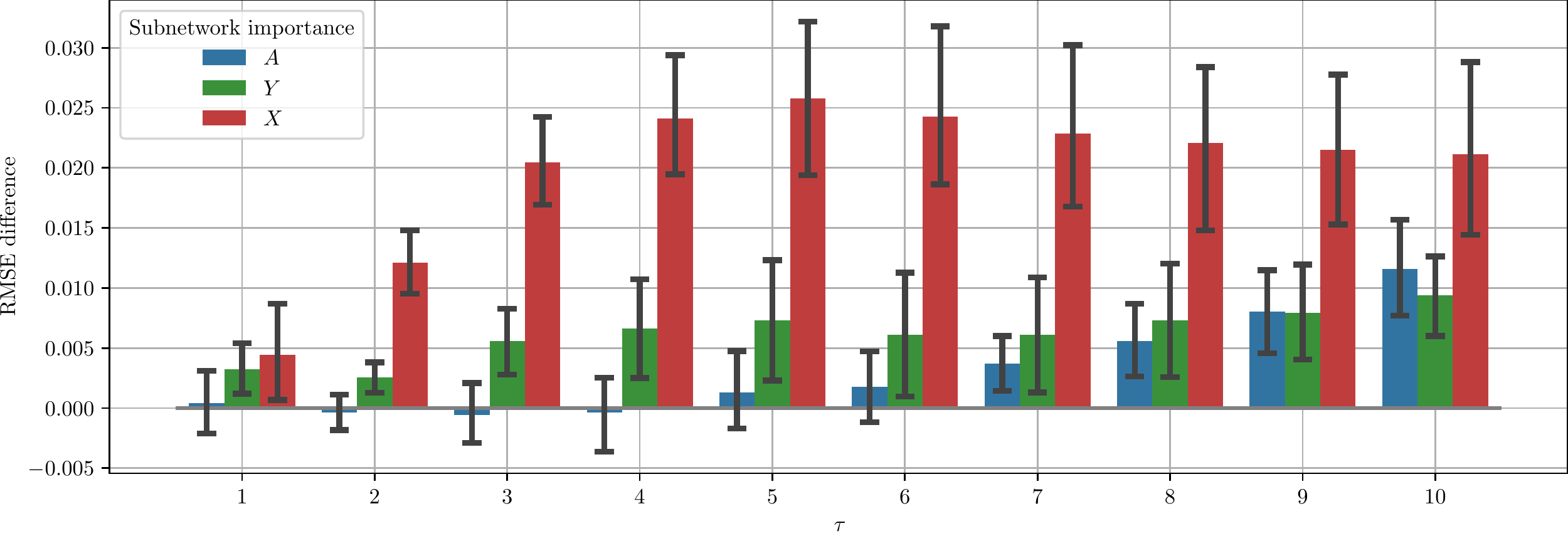}
    \vskip -0.1in
    \caption{Subnetworks importance scores based on semi-synthetic benchmark (higher values correspond to higher importance of subnetwork connectivity via cross-attentions). Shown: RMSE differences between model with isolated subnetwork and full \shortname, means $\pm$ standard errors.}
    \label{fig:subnetwork-importance}
\end{figure}

\end{document}